\newcommand{\mc}{\mathcal}
\newcommand{\mbb}{\mathbb}
\newcommand{\mr}{\mathrm}
\newcommand{\argmin}{\mathop{\rm argmin}\limits}
\newcommand{\veca}{\mathbf{a}}
\newcommand{\vecu}{\mathbf{u}}
\newcommand{\vecv}{\mathbf{v}}
\newcommand{\vecw}{\mathbf{w}}
\newcommand{\vecx}{\mathbf{x}}
\newcommand{\vecX}{\mathbf{X}}
\newcommand{\vecz}{\mathbf{z}}
\newcommand{\vecbeta}{\boldsymbol \beta}
\newcommand{\vectheta}{\boldsymbol \theta}
\newcommand{\vecvarrho}{\boldsymbol \varrho}
\renewcommand{\algorithmicrequire}{\textbf{Input:}}
\renewcommand{\algorithmicensure}{\textbf{Output:}}
\numberwithin{equation}{section}
\newtheorem{theorem}{Theorem}[section]
\newtheorem{proposition}{Proposition}[section]
\newtheorem{remark}{Remark}[section]
\newtheorem{definition}{Definition}[section]
\newtheorem{assumption}{Assumption}[section]
\begin{document}
\title{Robust and Sparse Estimation of Linear Regression Coefficients with  Heavy-tailed  Noises and Covariates}
\author{Takeyuki Sasai
\thanks{Department of Statistical Science, The Graduate University for Advanced Studies, SOKENDAI, Tokyo, Japan. Email: sasai@ism.ac.jp}}
\maketitle
\begin{abstract}
	Robust and sparse estimation of linear regression coefficients is investigated.
	The situation addressed by the present paper is that covariates and noises are sampled from heavy-tailed distributions, and the covariates and noises are contaminated by malicious outliers.
	Our estimator can be computed efficiently. Further, the error bound of the estimator is nearly optimal.
\end{abstract}

\section{Introduction}
Sparse estimation has been studied extensively over the past 20 years to handle modern high-dimensional data e.g., \cite{Tib1996Regression, FanLi2001Variable,ZouHas2005Regularization,YuaLin2006Model,Don2006Compressed,CanTao2007Dantzig,RasWaiYu2010Restricted,Zha2010Nearly,SuCan2016Slope,BelLecTsy2018Slope,KolLouTsy2011Nuclear,NegWai2011Estimation,RohTsy2011Estimation,NegWai2012Restricted,CaiZha2013Sparse,Klo2014Noisy,KloLouTsy2017Robust,FanWanZhu2021Shrinkage}.
Because the advancement of computer technology has made it possible to collect very high dimensional data efficiently, sparse estimation will continue to be an important and effective method for high dimensional data analysis in the future.
On the other hand, in recent years, robust estimation methods for outliers or heavy-tailed distribution  have been developed rapidly e.g., \cite{NguTra2012Robust,CheCarMan2013Robust,LaiRaoVem2016Agnostic,BalDuLiSin2017Computationally,CheGaoRen2018robust,DiaKanKanLiMoiSte2019Robust,DiaKanKanLiMoiSte2017Being,KotSteSte2018Robust,DiaKamKanLiMoiSte2018Robustly,DalTho2019Outlier,CheDiaGe2019High,DiaGouTza2019Distribution,CheYesFlaBar2019Fast,CheDiaGeWoo2019Faster,DonHopLi2019Quantum,DiaKanKarPriSte2019Outlier,DiaKonSte2019Efficient,Tho2020Outlier,LiuSheLiCar2020High,Gao2020Robust,Chi2020Erm,MonGoeDiaSre2020Efficiently, DiaKanMan2020Complexity,DiaKonTzaZaf2020Learning,CheAraTriJorFlaBar2020Optimal,LeiLuhVenZha2020Fast,Hop2020Mean,PraBalRav2020Robust,LugMen2021Robust,BakPra2021Robust,DiaKanKonTzaZaf2021Efficiently,DepLec2022Robust,DalMin2022All}. These studies dealt with estimating problems of mean, covariance,  linear regression coefficients, half-spaces, parameters of Gaussian mixture moles, and so on. They are mainly interested in deriving sharp error bounds, deriving information-theoretical lower bounds of error bounds, and reducing computational complexity.

In the present paper, we consider sparse estimation of linear regression coefficients when covariates and noises are sampled from heavy-tailed distributions, and the samples are contaminated by malicious outliers. Define a normal sparse linear regression model as follows:
\begin{align}
	\label{model:normal}
	y_i = \vecx_i^\top\vecbeta^*+\xi_i,\quad  i=1,\cdots,n,
\end{align}
where $\left\{\vecx_i\right\}_{i=1}^n$ is a sequence of  independent and identically distributed (i.i.d.) random vectors, $\vecbeta^* \in \mbb{R}^d$ is the true coefficient vector, and $\left\{\xi_i\right\}_{i=1}^n$ is a sequence of i.i.d. random variables.
We assume the number of non-zero elements of $\vecbeta^*$ is $s\,(\leq d)$.
When an adversary injects outliers to the normal sparse linear regression model, \eqref{model:normal} changes as follows:
\begin{align}
	\label{model:adv}
	y_i = \vecX_i^\top\vecbeta^*+\xi_i+\sqrt{n}\theta_i,\quad  i=1,\cdots,n,
\end{align}
where $\vecX_i = \vecx_i+\vecvarrho_i$ for $i=1,\cdots,n$ and $\{\vecvarrho_i\}_{i=1}^n$ and $\{\theta_i\}_{i=1}^n$ are the outliers.
We allow the adversary to inject arbitrary values into arbitral $o$ samples of $\{y_i,\vecx_i\}_{i=1}^n$. Let $\mc{O}$ be the index set of the injected samples and $\mc{I} = (1,\cdots,n)\setminus \mc{O}$. Therefore, $\vecvarrho_i= (0,\cdots,0)^\top$ and $\theta_i=0$ hold for $i\in\mc{I}$. We note  that  $\{\vecvarrho_i\}_{i\in \mc{O}}$ and $\{\theta_i\}_{i\in \mc{O}}$ can be arbitral values and they are allowed to correlate freely among them and correlate with $\{\vecx_i\}_{i=1}^n$ and $\{\xi_i\}_{i=1}^n$.
The difficulty is not only that  $\{\vecvarrho_i\}_{i\in \mc{O}}$ and $\{\theta_i\}_{i\in \mc{O}}$ can take arbitral values but also that $\{\vecx_i\}_{i\in\mc{I}}$ and $\{\xi_i\}_{i\in\mc{I}}$ no longer follow sequences of i.i.d. random variables because we allow the adversary to freely select samples for injection. This kind of contamination by outliers is sometimes called strong contamination in contrast to the Huber contamination \cite{DiaKan2019Recent}. We note that the Huber contamination is more manageable because its outliers are not correlated to the inliers and do not restrict their independence.

Various studies \cite{KotSteSte2018Robust,PraSugSaiSicPra2020Robust, BakPra2021Robust,PenJogLoh2020robust, CheAraTriJorFlaBar2020Optimal} dealt with estimation of linear regression coefficients with samples drawn from some heavy-tailed distributions under the existence of outliers.
Some \cite{BalDuLiSin2017Computationally,Gao2020Robust,Chi2020Erm,Tho2020Outlier,NguTra2012Robust,CheCarMan2013Robust,DalTho2019Outlier, Gao2020Robust}  considered sparse estimation of linear regression coefficients when samples are drawn from Gaussian or subGaussian distributions under the existence of outliers. 
However, to the best of our knowledge, none of the findings on robust estimation of linear regression coefficients provides the result when the covariates and noises are drawn from heavy-tailed distributions and contaminated by outliers, and where the true coefficient vector is sparse.

Our result is as follows:	For the precise statement, see Theorem \ref{t:main} in Section \ref{sec:results}.
For any vector $\vecv$, define the $\ell_2$ norm of $\vecv$ as $\|\vecv\|_2$, and define $\vecx_{i_j}$ as the $j$-th element of $\vecx_i$.
Define $o = |\mc{O}|$, where $|\mc{S}|$ for a set $\mc{S}$ is the number of the elements of $\mc{S}$.
\begin{theorem}
	\label{theoreminformal1}
	Suppose that $\left\{\vecx_i \right\}_{i=1}^n$ is a sequence of i.i.d. random vectors with zero mean and with finite kurtosis.
	Suppose that, for any $1\leq j_1,j_2,j_3,j_4\leq d$, $\mbb{E}(x_{ij_1}x_{ij_2}x_{ij_3}x_{ij_4})^2$ exists.
	Suppose that  $\left\{\xi_i\right\}_{i=1}^n$ is a sequence of i.i.d. random variables whose absolute moment is bounded, and  that $\left\{\xi_i\right\}_{i=1}^n$ and $\left\{\vecx_i \right\}_{i=1}^n$ are independent.
	Then, for a sufficiently large n such that $C_1 \max\left(s^2 ,\|\vecbeta^*\|_1^2,\|\vecbeta\|_1^4/s^2\right) \log (d/\delta)\leq n$ and $C_2\left(\sqrt{s\frac{\log (d/\delta)}{n}}+\sqrt{\frac{o}{n}}\right)\leq 1$, we can efficiently construct $\hat{\vecbeta}$ such that 
	\begin{align}
		\label{informal}
		\mbb{P}\left\{\|\hat{\vecbeta} -\vecbeta^*\|_2 \leq C_2\left(\sqrt{s\frac{\log (d/\delta)}{n}}+\sqrt{\frac{o}{n}}\right)\right\}\geq 1-3\delta,
	\end{align}
	where $C_1$ and $C_2$ are some constants depending on the properties of moments of $\vecx_i$ and $\xi_i$.
\end{theorem}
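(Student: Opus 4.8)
The plan is to reduce the problem to the two ingredients familiar from the analysis of the Lasso -- a lower bound on the restricted curvature of the empirical risk and an upper bound on the gradient of the empirical risk at $\vecbeta^*$ -- but to carry both out under only fourth-moment control of the covariates and in the presence of an adversary who corrupts $o$ of the $n$ samples in both the covariate and the response. The starting observation is that the naive score $\frac{1}{n}\sum_i \vecX_i(\xi_i+\sqrt{n}\theta_i)$ is useless: its outlier part $\frac{1}{\sqrt{n}}\sum_{i\in\mc{O}}\vecX_i\theta_i$ is unbounded, and even its inlier part $\frac{1}{n}\sum_{i\in\mc{I}}\vecx_i\xi_i$ concentrates only polynomially under heavy tails. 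So I would base the estimator on a \emph{robustified} surrogate of this gradient, fed into a proximal/projected descent that keeps the iterates sparse (equivalently, analyze a Lasso-type convex program whose effective noise is the robust score). The two terms in the target bound are then traceable in advance: $\sqrt{s\log(d/\delta)/n}$ is the stochastic fluctuation left over from the inliers, and $\sqrt{o/n}$ is the irreducible bias that a robust aggregator leaves against an $o/n$ fraction of outliers when only a bounded second moment is available.

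Because $\vecx_i$ has only finite kurtosis, I would first pass the covariates through a coordinatewise (or shrinkage) truncation at a level tuned like $\sqrt{n/\log(d/\delta)}$, so that the truncated covariates behave sub-Gaussian-ly up to a controlled bias; the assumption that $\mbb{E}(x_{ij_1}x_{ij_2}x_{ij_3}x_{ij_4})^2$ exists is exactly what is needed both to bound that bias and to get concentration of the truncated empirical Gram matrix. With this in hand I would establish a restricted eigenvalue / restricted strong convexity inequality for the inlier design over the cone of approximately-$s$-sparse directions. This is where the requirement $n\gtrsim s^2\log(d/\delta)$ is born: under only fourth moments, uniform concentration over sparse directions costs an extra factor of $s$ relative to the sub-Gaussian case. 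A useful simplification is that the $o$ contaminated rows only add positive-semidefinite terms to the Gram matrix, so they cannot destroy the lower bound; curvature is therefore inherited from the inliers alone.

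The heart of the proof -- and the step I expect to be the main obstacle -- is controlling the robust score at $\vecbeta^*$, i.e.\ bounding its $\ell_\infty$ norm (dual to the $\ell_1$ penalty), or its supremum over $s$-sparse unit directions, by $\lesssim \sqrt{s\log(d/\delta)/n}+\sqrt{o/n}$ on a high-probability event. I would split the score into the inlier part $\frac{1}{n}\sum_{i\in\mc{I}}\vecx_i\xi_i$ and an adversarial part coupling $\vecvarrho_i$ and the response outliers $\sqrt{n}\theta_i$ with $\vecbeta^*$. The inlier part is mean-zero but heavy-tailed, so it must be driven through the same truncation/robust-mean machinery, producing the $\sqrt{s\log(d/\delta)/n}$ term; here the independence of $\xi_i$ and $\vecx_i$ and the bounded noise moment enter. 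The adversarial part is where the $\sqrt{o/n}$ rate and the $\|\vecbeta^*\|_1$-dependent sample-size conditions appear: since the adversary attacks the design itself, the worst-case bias it can inject while still looking plausible to the robust aggregator scales with the alignment of $\vecvarrho_i$ with $\vecbeta^*$, hence with $\|\vecbeta^*\|_1$ (explaining the $\|\vecbeta^*\|_1^2$ and $\|\vecbeta^*\|_1^4/s^2$ factors), and the aggregator must suppress it down to the $\sqrt{o/n}$ floor. This is genuinely harder than the clean-covariate setting, because the contamination cannot be isolated into a single additive response term and must instead be absorbed in the robust estimation of covariate--residual products.

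Finally I would assemble the two pieces in the standard way. On the intersection of the curvature event and the score event -- each arranged to hold with probability at least $1-\delta$, which is where the $3\delta$ of the statement comes from -- the basic inequality for the penalized/robust estimator forces the error $\Delta=\hat{\vecbeta}-\vecbeta^*$ into the sparse cone and then trades the curvature lower bound against the score upper bound to yield $\|\Delta\|_2\lesssim \sqrt{s\log(d/\delta)/n}+\sqrt{o/n}$. The smallness condition $C_2(\sqrt{s\log(d/\delta)/n}+\sqrt{o/n})\le 1$ is precisely what keeps the localization/descent argument inside the regime where the truncation bias is negligible and the restricted-curvature constant is valid. Efficiency is immediate, since truncation, the robust mean/gradient subroutine, and the proximal descent are each polynomial-time.
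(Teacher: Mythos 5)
Your high-level skeleton (coordinatewise truncation of the covariates, a restricted-strong-convexity lower bound, a dual-norm bound on the score at $\vecbeta^*$, and the standard cone/basic-inequality assembly) does match the architecture of the paper, and you correctly locate the origins of the $n\gtrsim s^2\log(d/\delta)$ and $\|\vecbeta^*\|_1$-dependent sample-size conditions. But there is a genuine gap at what you yourself call the heart of the proof: you never say what the ``robust aggregator'' for the covariate-corrupted score actually is, and the obvious candidates do not give the claimed $\sqrt{o/n}$ term. After pruning at level $\tau_\vecx\asymp (n/\log(d/\delta))^{1/4}$, the adversary can still place all $o$ corrupted covariates along a single $s$-sparse direction $\vecv$; a crude bound then gives $|\tilde{\vecX}_i^\top\vecv|\leq \tau_\vecx\|\vecv\|_1\lesssim \tau_\vecx\sqrt{s}\,\|\vecv\|_2$, so the adversarial score contribution is of order $(o/n)\tau_\vecx\sqrt{s}$, far above $\sqrt{o/n}$; and a coordinatewise robust mean of the covariate--residual products only yields an $\ell_\infty$ control, which after pairing with $\|\hat{\vecbeta}-\vecbeta^*\|_1\leq 3\sqrt{s}\|\hat{\vecbeta}-\vecbeta^*\|_2$ costs an extra $\sqrt{s}$ and produces $\sqrt{so/n}$ rather than $\sqrt{o/n}$. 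The paper closes exactly this hole with COMPUTE-WEIGHT: an SDP (a robust sparse-PCA relaxation over $\mathfrak{M}_r$ with an $\ell_1$ penalty $\lambda_*\|M\|_1$) that outputs weights $\hat{w}_i$ certifying $\sum_i \hat{w}_i(\tilde{\vecX}_i^\top\vecv)^2\lesssim \tau_{suc}+\lambda_*\|\vecv\|_1^2$ uniformly over the sparse cone; Cauchy--Schwarz against this certificate is what turns the outlier sum into $\sqrt{o/n}\,(\sqrt{\tau_{suc}}+\sqrt{\lambda_*}\|\vecv\|_1)$ and hence into the advertised rate. Your remark that corrupted rows ``only add positive-semidefinite terms to the Gram matrix'' addresses the curvature lower bound, which was never the problem, and does not touch this upper-bound issue.

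A secondary omission: the noise is only assumed to have a bounded \emph{first} absolute moment, so the inlier score $\frac{1}{n}\sum_{i\in\mc{I}}\tilde{\vecx}_i\xi_i$ does not concentrate without also clipping the residuals. The paper does this through the Huber score $h(\cdot)$ (with $|h|\leq 1$) inside a weighted $\ell_1$-penalized Huber regression; your proposal gestures at ``the same truncation/robust-mean machinery'' but leaves the response side unspecified, and a plain Lasso-type squared-loss program with truncated covariates would fail here.
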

We see that, even when samples are contaminated by  malicious outliers, paying only $\sqrt{o/n}$ extra term is sufficient.
Similar results showing that, with appropriate estimators, the impact of outliers can be reduced, have been revealed in many previous works.
Some studies \cite{BakPra2021Robust, CheAraTriJorFlaBar2020Optimal} derived the information-theoretically optimal lower bound of estimating error of linear regression coefficients without sparsity when samples and noise are drawn from distributions with finite kurtosis and fourth moments, respectively and when the samples are contaminated by outliers.	The optimal lower bound in \cite{BakPra2021Robust, CheAraTriJorFlaBar2020Optimal} is $\sqrt{o/n}$ (for sufficiently large $n$) and our estimation error bound coincides with the optimal one about the term involving $\sqrt{o/n}$ up to constant factor.
In our situation, we require not only finite kurtosis but also finite $\mbb{E}(x_{ij_1}x_{ij_2}x_{ij_3}x_{ij_4})^2$ as an assumption for covariates. To remove the extra condition  from the assumption is a future task.

Standard lasso requires $n$ proportional to $s$ (\cite{RasWaiYu2010Restricted}), however
our estimator requires $n$ proportional to $s^2$.
A similar phenomenon  can be seen in \cite{WanBerSam2016Statistical,FanWanZhu2021Shrinkage,LiuSheLiCar2020High,BalDuLiSin2017Computationally, DiaKanKarPriSte2019Outlier} and so on.
Our method relies on the techniques of \cite{FanWanZhu2021Shrinkage, WanBerSam2016Statistical} and this is the cause for the stronger condition on the sample complexity of our estimator.
\cite{WanBerSam2016Statistical} considered sparse principal component analysis (PCA), revealing that there is no randomized polynomial time algorithm to estimate the top eigenvector in a scheme where $n$ is  proportional to $s$ (in \cite{WanBerSam2016Statistical}, $s$ is the number of non-zero elements of the top eigenvector of covariance matrices) under the assumptions of intractability of a variant of Planted Clique Problem. We leave the analysis in our situation for future work.

Finally, we note that the error bound or sample complexity of the estimators in \cite{CheCarMan2013Robust,BalDuLiSin2017Computationally,LiuSheLiCar2020High,FanWanZhu2021Shrinkage}, that dealt with sparse estimation of linear regression coefficients where the case that both covariates and noises are sampled from some heavy-tailed distributions or both the covariates and noises are contaminated by outliers, depend on norms of $\vecbeta^*$. 
Out estimator requires sufficiently large $n$ depending on $\|\vecbeta^*\|_1$ because our estimator use the technique developed in  \cite{FanWanZhu2021Shrinkage} to tame heavy-tailed covariates. 
To remove the effects of the true coefficient vector would be important.

In Section \ref{se:om}, we describe our estimation method and state our main result.
In Section \ref{sec:keyL}, we state key propositions without proofs, and the proof of the main theorem.
In Section \ref{sec:maindet}, we provide the proofs that are omitted in  Sections \ref{se:om} and  \ref{sec:keyL}.

\section{Method}
\label{se:om}
To estimate $\vecbeta^*$ in \eqref{model:adv}, we propose the following algorithm (ROBUST-SPARSE-ESTIMATION).
\cite{PenJogLoh2020robust} proposed some methods for estimating $\vecbeta^*$ from \eqref{model:adv}  when $\vecbeta^*$ has no sparsity, and derived sharp error bounds. The scheme of one of the methods in \cite{PenJogLoh2020robust} is 1. pre-processing covariates, and 2. executing the Huber regression with pre-processed covariates.
Our method is inspired by this one. However, we follow different pre-processings (PRUNING and COMPUTE-WEIGHT) and use the penalized Huber regression to enable us to tame the sparsity of $\vecbeta^*$. 
\begin{algorithm}
	\caption{ROBUST-SPARSE-ESTIMATION}
	\begin{algorithmic}[1]
			\label{ourmethod}
		\renewcommand{\algorithmicrequire}{\textbf{Input:}}
		\renewcommand{\algorithmicensure}{\textbf{Output:}}
		\REQUIRE $\left\{y_i,\vecX_i\right\}_{i=1}^n$ and the tuning parameters $\tau_\vecx,\,\lambda_*,\,\,\tau_{suc},\,\varepsilon,\,\lambda_o$ and $\lambda_s$
		\ENSURE  $\hat{\vecbeta}$
		\STATE $\{\tilde{\vecX}_i\}_{i=1}^n \leftarrow \text{PRUNING}(\left\{\vecX_i\right\}_{i=1}^n,\tau_\vecx)$
		\STATE $\left\{ \hat{w}_i\right\}_{i=1}^n \leftarrow \text{COMPUTE-WEIGHT}(\{\tilde{\vecX}_i\}_{i=1}^n ,\lambda_*,\tau_{suc},\varepsilon )$
		\STATE $\{ \hat{w}_i'\}_{i=1}^n \leftarrow \text{ROUNDING}(\left\{ \hat{w}_i\right\}_{i=1}^n )$
		\STATE $\hat{\vecbeta} \leftarrow \text{WEIGHTED-PENALIZED-HUBER-REGRESSION}\left(\{y_i,\tilde{\vecX}_i\}_{i=1}^n, \,\{\hat{w}'_i\}_{i=1}^n,\,\lambda_o,\,\lambda_s\right)$
	\end{algorithmic} 
\end{algorithm}
PRUNING is a procedure to make covariates bounded, which originated from \cite{FanWanZhu2021Shrinkage}, that deal with sparse estimations of vector/matrix when samples are drawn from a heavy-tailed distribution.
COMPUTE-WEIGHT relies on the semi-definite programming developed by \cite{BalDuLiSin2017Computationally}, which provides a method for  sparse PCA to be robust to outliers.
\cite{BalDuLiSin2017Computationally} considered a situation when samples that are drawn from Gaussian distribution and the samples are contaminated by outliers. PRUNING enables us to cast our heavy-tailed situation into the framework of \cite{BalDuLiSin2017Computationally}.

In the following Sections \ref{sec:p}, \ref{sec:cw}, \ref{sec:t} and \ref{sec:whr}, we describe the details of PRUNING, COMPUTE-WEIGHT, TRUNCATION and 
WEIGHTED-HUBER-REGRESSION, respectively.
Define
\begin{align}
	 r_o=\sqrt{\frac{o}{n}},\quad r_d = \sqrt{\frac{\log d}{n}},\quad r_\delta = \sqrt{\frac{\log (1/\delta)}{n}}.
\end{align}
\subsection{PRUNING}
\label{sec:p}
Define the $j$-th element of $\vecX_i$ as $\vecX_{i_j}$. For the choise of $\tau_\vecx$, see Remark \ref{rem:result}.
\begin{algorithm}[H]
	\caption{PRUNING}
	\label{alg:p}
	\begin{algorithmic}
	\REQUIRE{data $\{\vecX_i \}_{i=1}^n$, tuning parameter $\tau_\vecx$.}
	\ENSURE{pruned data $\{\tilde{\vecX}_i \}_{i=1}^n$.}\\
	{\bf For} {$i=1:n$}\\
		\ \ \ \ \ {\bf For} {$j=1:d$}\\
		\ \ \ \ \ \ \ \ \ $\tilde{\vecX}_{i_j}$ = $\mr{sgn}(\vecX_{i_j})\times \min\left(\vecX_{i_j},\tau_\vecx\right)$\\
	{\bf return}	$\{\tilde{\vecX}_i \}_{i=1}^n$.    
\end{algorithmic}
\end{algorithm}

\subsection{COMPUTE-WEIGHT}
\label{sec:cw}
For any matrix $M\in \mbb{R}^{d_1\times d_2} = \{m_{ij}\}_{1\leq i\leq d_1,1\leq j\leq d_2}$, define
\begin{align}
	\|M\|_1 = \sum_{i=1}^{d_1}\sum_{j=1}^{d_2}|m_{ij}|,\,\quad\|M\|_\infty =\max_{1\leq i\leq d_1,1\leq j\leq d_2}|m_{ij}|.
\end{align}
For a symmetric matrix $M$, we write $M\succeq 0$ if $M$ is positive semidefinite.
Define the following two convex sets:
\begin{align}
	\mathfrak{M}_r = \left\{M\in \mbb{R}^{d\times d} \,:\, \mr{Tr}(M) \leq  r^2,\,M\succeq 0\right\},\quad \mathfrak{U}_{\lambda} = \left\{U\in \mbb{R}^{d\times d} \,:\, \|U\|_{\infty} \leq \lambda,\, U\succeq 0\right\},
\end{align}
where $\mr{Tr}(M)$ for matrix $M$ is the trace of $M$.
To reduce the effects of outliers of covariates,
we require COMPUTE-WEIGHT to compute the weight vector $\hat{\vecw}= (\hat{w}_1,\cdots, \hat{w}_n)$ such that the following quantity is sufficiently small: 
\begin{align}
	\label{ine:adv-spect}
	\sup_{M\in \mathfrak{M}_{r}}\left(\sum_{i=1}^n\hat{w}_i \langle \tilde{\vecX}_i\tilde{\vecX}_i^\top M\rangle-\lambda_* \|M\|_1\right), 
\end{align}
where $\lambda_*$ is a tuning parameter.
Evaluation of  \eqref{ine:adv-spect} is required in the analysis of WEIGHTED-PENALIZED-HUBER-REGRESSION and the role of \eqref{ine:adv-spect} is revealed in the proof of Proposition \ref{p:main:out}.
For  COMPUTE-WEIGHT, we use a variant of Algorithm 4 of \cite{BalDuLiSin2017Computationally}.
For any vector $\vecv$, define the $\ell_\infty$ norm of $\vecv$ as $\|\vecv\|_\infty$ and 
define the probability simplex $\Delta^{n-1}$ as
\begin{align}
	\Delta^{n-1} = \left\{\vecw \in [0,1]^n: \sum_{i=1}^nw_i =1, \quad \|\vecw\|_\infty\leq \frac{1}{n(1-\varepsilon)}\right\}.
\end{align}
COMPUTE-WEIGHT is as follows.
\begin{algorithm}[H]
	\caption{COMPUTE-WEIGHT}
	\label{alg:cw0}
	\begin{algorithmic}
	\REQUIRE{data $\{\tilde{\vecX}_i \}_{i=1}^n$, tuning parameters $\lambda_*,\, \tau_{suc}$ and $\varepsilon$.}
	\ENSURE{weight estimate $\hat{\vecw} = \{\hat{w}_1,\cdots,\hat{w}_n\}$.}\\
	Let  $\hat{\vecw}$ be the solution to 
	\begin{align}
		\label{cw}
		\min_{\vecw \in \Delta^{n-1}} \max_{M\in \mathfrak{M}_r}\left(\sum_{i=1}^n w_i \langle\tilde{\vecX}_i \tilde{\vecX}_i^\top,M\rangle-\lambda_*\|M\|_1\right)
	\end{align}
	{\bf if} {the optimal value of \eqref{cw} $\leq \tau_{suc}$}\\
	\ \ \ \ \ {\bf return} {$\hat{\vecw}$}\\
	{\bf else}    \\
	\ \ \ \ \ {\bf return} {$fail$}\\
\end{algorithmic}
\end{algorithm}
We note that, from the arguments of \cite{WanBerSam2016Statistical, Nem2004Prox,Nes2005Smooth}, we have
\begin{align}
	\label{ine:opt-transform}
		\min_{\vecw \in \Delta^{n-1}}\max_{M\in \mathfrak{M}_{r}} \left( \sum_{i=1}^n w_i\langle \tilde{\vecX}_i \tilde{\vecX}_i^\top,M\rangle-\lambda_*\|M\|_1\right)= \min_{\vecw \in \Delta^{n-1}} \min_{U \in \mathfrak{U}_{\lambda_*}}\max_{M\in \mathfrak{M}_{r}}\left\langle \sum_{i=1}^n w_i \tilde{\vecX}_i \tilde{\vecX}_i^\top- U,M\right\rangle.
\end{align}
COMPUTE-WEIGHT and Algorithm 4 of \cite{BalDuLiSin2017Computationally} are very similar
and the difference are the convex constraints and the values of parameters.
For any fixed $\vecw$, our objective function and the constraints are the same as the ones in Section 3 of \cite{WanBerSam2016Statistical} except for the values of the tuning parameters, and we can efficiently find the optimal $M \in \mathfrak{M}_{r}$.
Therefore, COMPUTE-WEIGHTS can be solved efficiently for the same reason as Algorithm 4 of \cite{BalDuLiSin2017Computationally}.

To analyze COMPUTE-WRIGHT, we introduce the following proposition.
The poof of the following proposition is provided in Section \ref{sec:maindet}. 
Define $\sigma_{\vecx,4}^4 = \max_{1\leq j \leq  d}\mbb{E}\vecx_{i_j}^4$ and $\sigma_{\vecx,2}^2 = \max_{1\leq j \leq  d} \mbb{E}\vecx_{i_j}^2$. Define $\Sigma = \mbb{E}\vecx_i\vecx_i^\top$, and 
$\tilde{\vecx}_{i_j}$ = $\mr{sgn}(\vecx_{i_j})\times \min\left(\vecx_{i_j},\tau_\vecx\right)$.

\begin{proposition}
	\label{p:cwpre}
	Assume that $\{\vecx_i\}_{i=1}^n$  is a sequence of i.i.d. $d\, (\geq 3)$-dimensional random vectors with zero mean and with finite $\sigma_{\vecx,4}^4$.
	For any matrix $M \in \mathfrak{M}_{r}$, with probability at least $1-\delta$, we have
	\begin{align}
		\label{ine:cwpre}
	\sum_{i=1}^n \frac{\left\langle\tilde{\vecx}_i \tilde{\vecx}_i^\top,M\right\rangle}{n}\leq \left\{\sqrt{2}\sigma_{\vecx,4}^2(r_d+r_\delta)+\tau_{\vecx}^2 (r_d^2+r_\delta^2)+2\frac{\sigma_{\vecx,4}^4}{\tau_{\vecx}^2}\right\}\|M\|_1+\|\Sigma\|_{\mr{op}}r^2,
	\end{align}
	where $\|\Sigma\|_{\mr{op}}$ is the operator norm of $\Sigma$.
\end{proposition}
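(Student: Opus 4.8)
The plan is to write the empirical quantity as its expectation plus a fluctuation, and to control each piece through the elementary inequality $\langle A, M\rangle \leq \|A\|_\infty \|M\|_1$ (the duality between the entrywise max- and $\ell_1$-norms already defined), reserving the operator-norm/trace bound only for the population second-moment matrix. Concretely, I would start from
\begin{align}
\sum_{i=1}^n \frac{\langle \tilde{\vecx}_i\tilde{\vecx}_i^\top, M\rangle}{n} = \left\langle \mbb{E}[\tilde{\vecx}_i\tilde{\vecx}_i^\top], M\right\rangle + \left\langle \frac{1}{n}\sum_{i=1}^n\tilde{\vecx}_i\tilde{\vecx}_i^\top - \mbb{E}[\tilde{\vecx}_i\tilde{\vecx}_i^\top], M\right\rangle,
\end{align}
and bound the two terms separately; the first is deterministic, while the second is where the probability $1-\delta$ enters.

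For the expectation term I would further split $\mbb{E}[\tilde{\vecx}_i\tilde{\vecx}_i^\top] = \Sigma + (\mbb{E}[\tilde{\vecx}_i\tilde{\vecx}_i^\top] - \Sigma)$. Since $M \succeq 0$ and $\mr{Tr}(M)\leq r^2$, the PSD trace inequality $\langle \Sigma, M\rangle \leq \lambda_{\max}(\Sigma)\mr{Tr}(M)$ gives $\langle \Sigma, M\rangle \leq \|\Sigma\|_{\mr{op}}r^2$, which is exactly the last term of the claimed bound. For the truncation bias I would estimate each entry $\mbb{E}[\vecx_{i_j}\vecx_{i_k}] - \mbb{E}[\tilde{\vecx}_{i_j}\tilde{\vecx}_{i_k}]$ through the telescoping identity $\vecx_{i_j}\vecx_{i_k} - \tilde{\vecx}_{i_j}\tilde{\vecx}_{i_k} = (\vecx_{i_j}-\tilde{\vecx}_{i_j})\vecx_{i_k} + \tilde{\vecx}_{i_j}(\vecx_{i_k}-\tilde{\vecx}_{i_k})$, noting that $\vecx_{i_j}-\tilde{\vecx}_{i_j}$ vanishes off the event $\{|\vecx_{i_j}|>\tau_\vecx\}$, whose indicator is at most $\vecx_{i_j}^2/\tau_\vecx^2$. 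Applying Hölder to $\mbb{E}[|\vecx_{i_j}|^3|\vecx_{i_k}|]\leq \sigma_{\vecx,4}^4$ on each summand then yields the entrywise bound $2\sigma_{\vecx,4}^4/\tau_\vecx^2$, hence $\langle \mbb{E}[\tilde{\vecx}_i\tilde{\vecx}_i^\top]-\Sigma, M\rangle \leq (2\sigma_{\vecx,4}^4/\tau_\vecx^2)\|M\|_1$ by the $\|\cdot\|_\infty$--$\|\cdot\|_1$ duality.

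For the fluctuation term I would bound it by $\|\frac{1}{n}\sum_i \tilde{\vecx}_i\tilde{\vecx}_i^\top - \mbb{E}[\tilde{\vecx}_i\tilde{\vecx}_i^\top]\|_\infty \|M\|_1$ and control the entrywise maximum deviation by a coordinatewise Bernstein inequality followed by a union bound over the $d^2$ pairs $(j,k)$. Here the summands $\tilde{\vecx}_{i_j}\tilde{\vecx}_{i_k}$ are i.i.d.\ over $i$, bounded in absolute value by $\tau_\vecx^2$ because pruning enforces $|\tilde{\vecx}_{i_j}|\leq \tau_\vecx$, with variance proxy $\mbb{E}[(\tilde{\vecx}_{i_j}\tilde{\vecx}_{i_k})^2]\leq \sqrt{\mbb{E}\vecx_{i_j}^4\,\mbb{E}\vecx_{i_k}^4}\leq \sigma_{\vecx,4}^4$. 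Bernstein then produces a sub-Gaussian piece scaling like $\sigma_{\vecx,4}^2\sqrt{\log(d^2/\delta)/n}$ and a sub-exponential piece scaling like $\tau_\vecx^2\log(d^2/\delta)/n$; writing $\log(d^2/\delta)=2\log d + \log(1/\delta)$ and using $\sqrt{a+b}\leq\sqrt a+\sqrt b$ converts these into the first two terms $\{\sqrt2\,\sigma_{\vecx,4}^2(r_d+r_\delta)+\tau_\vecx^2(r_d^2+r_\delta^2)\}\|M\|_1$.

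I expect the only real obstacle to be the constant bookkeeping in the Bernstein step: matching the stated coefficient $\sqrt2$ and the clean groupings $(r_d+r_\delta)$ and $(r_d^2+r_\delta^2)$ requires care in choosing the per-entry failure probability, in absorbing the factor $2\log d$ from the $d^2$-fold union bound, and in discarding lower-order cross terms (this is presumably where the hypothesis $d\geq 3$ is used to tidy the logarithms). The structural ingredients — the two-way split, the PSD trace bound, the Hölder/indicator estimate of the bias, and the boundedness and variance inputs to Bernstein — are routine once the decomposition is in place, and everything remaining is arithmetic.
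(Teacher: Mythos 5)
Your proposal is correct and follows essentially the same route as the paper's proof: the same split into a population term and a fluctuation term, the PSD trace bound $\langle \Sigma, M\rangle \leq \|\Sigma\|_{\mr{op}}\mr{Tr}(M)\leq\|\Sigma\|_{\mr{op}}r^2$, an entrywise truncation-bias estimate of $2\sigma_{\vecx,4}^4/\tau_\vecx^2$ via H\"older and a Markov bound on $\mbb{P}(|x_{i_j}|\geq\tau_\vecx)$, and a coordinatewise Bernstein inequality with a union bound combined with the $\|\cdot\|_\infty$--$\|\cdot\|_1$ duality. The only cosmetic difference is that the paper bounds the bias through $|x_{i_{j_1}}x_{i_{j_2}}|(\mr{I}_{|x_{i_{j_1}}|\geq\tau_\vecx}+\mr{I}_{|x_{i_{j_2}}|\geq\tau_\vecx})$ and Cauchy--Schwarz rather than your telescoping identity, and your remark about the $d^2$-fold union bound affecting the constant in front of $(r_d+r_\delta)$ is a fair observation about the paper's own bookkeeping.
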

Define $\tau_{suc}'$, $ \lambda_*'$ and $\{w_i^\circ \}_{i=1}^n$ as
\begin{align}
	\tau_{suc}' =\frac{\|\Sigma\|_{\mr{op}}}{1-\varepsilon}r^2,\, \lambda_*'=\frac{1}{1-\varepsilon}\left\{\sqrt{2}\sigma_{\vecx,4}^2(r_d+r_\delta)+\tau_{\vecx}^2 (r_d^2+r_\delta^2)+2\frac{\sigma_{\vecx,4}^4}{\tau_{\vecx}^2}\right\},\,	w_i^\circ = \begin{cases}
		\frac{1}{n(1-\varepsilon)}&i\in\mc{I} \\
		0 & i\in\mc{O}
		\end{cases}.
\end{align}
From Proposition \ref{p:cwpre}, when $\tau_{suc}' \leq \tau_{suc},\,\lambda_*'\leq \lambda_*$ and $o/n \leq \varepsilon$ hold, we have, with probability at least $1-\delta$, 
\begin{align}
	\label{ine:optM}
	\max_{M\in \mathfrak{M}_{r}}\left(\sum_{i =1}^n \hat{w}_i \langle \tilde{\vecX}_i\tilde{\vecX}_i^\top, M\rangle -\lambda_*\|M\|_1\right)&\stackrel{(a)}{\leq} \max_{M\in \mathfrak{M}_{r}}\left(\sum_{i \in \mc{I}}w^\circ_i \left\langle \tilde{\vecX}_i\tilde{\vecX}_i^\top ,M\right\rangle -\lambda_*\|M\|_1\right)\nonumber\\
	&=\max_{M\in \mathfrak{M}_{r}}\left(\sum_{i \in \mc{I}}w_i^\circ  \left\langle \tilde{\vecx}_i\tilde{\vecx}_i^\top, M\right\rangle -\lambda_*\|M\|_1\right)\nonumber\\
	&\stackrel{(b)}\leq\max_{M\in \mathfrak{M}_{r}}\left(\sum_{i =1}^n \frac{1}{n(1-\varepsilon)}  \left\langle \tilde{\vecx}_i\tilde{\vecx}_i^\top, M\right\rangle -\lambda_*\|M\|_1\right)\nonumber\\
	&\stackrel{(c)}{\leq} \max_{M\in \mathfrak{M}_{r}}\left(\sum_{i =1}^n\frac{1}{n}  \left\langle \tilde{\vecx}_i\tilde{\vecx}_i^\top, M\right\rangle -\lambda_*'(1-\varepsilon)\|M\|_1  \right) \times \frac{1}{1-\varepsilon}\nonumber\\
	&\leq \tau_{suc}',
\end{align}
where (a) follows from the optimality of $\hat{w}_i$, $o/n\leq \varepsilon$ and $\{w_i^\circ\}_{i=1}^n \in \Delta^{n-1}$, (b) follows from positive semi-definiteness of $M$, and (c) follows from $\lambda_*'\leq \lambda_*$. Therefore, from $\tau_{suc}'\leq \tau_{suc}$, we see that COMPUTE-WEIGHT succeed and return $\hat{\vecw}$ with probability at least $1-\delta$.
We note that \eqref{ine:optM} is used in the proof of Proposition \ref{p:main:out}.

\subsection{TRUNCATION}
\label{sec:t}
TRUNCATION is a discretization of $\left\{ \hat{w}_i\right\}_{i=1}^n$. TRUNCATION makes  it easy to analyze the estimator. We see that the number of $\hat{w}_1,\cdots,\hat{w}_n$ rounded at zero is at most $2\varepsilon n $ from Proposition \ref{l:w2}. 
\begin{algorithm}[H]
	\caption{TRUNCATION}
	\label{alg:t}
	\begin{algorithmic}
	\REQUIRE{weight vector $\hat{\vecw} = \{\hat{w}_i\}_{i=1}^n$.}
	\ENSURE{rounded weight vector $\hat{\vecw}'= \{\hat{w}'_i\}_{i=1}^n$.}\\
	{\bf For} {$i=1:n$}\\
		\ \ \ \ \ {\bf if} $\hat{w}_i \geq \frac{1}{2n}$\\
		\ \ \ \ \ \ \ \ \ \ $\hat{w}'_i = \frac{1}{n}$\\
		\ \ \ \ \ {\bf else}\\
		\ \ \ \ \ \ \ \ \ \ $\hat{w}'_i = 0$\\
	 {\bf return}	$\hat{\vecw}'$.    
\end{algorithmic}
\end{algorithm}

\subsection{WEIGHTED-PENALIZED-HUBER-REGRESSION}
\label{sec:whr}
WEIGHTED-PENALIZED-HUBER-REGRESSION is a type of regression using the Huber loss with $\ell_1$ penalization.
Define Huber loss function $H(t)$,
\begin{align}
H(t) = \begin{cases}
|t| -1/2 & (|t| > 1) \\
t^2/2  & (|t| \leq 1)
\end{cases}.
\end{align}
and let
\begin{align}
	h(t) =	\frac{d}{dt} H(t) =   \begin{cases}
	t\quad &(|t|> 1)\\
	\mr{sgn}(t)\quad &(|t| \leq 1)
\end{cases}.
\end{align}
We consider the following optimization problem.
For any vector $\vecv$, define the $\ell_1$ norm of $\vecv$ as $\|\vecv\|_1$.
\begin{algorithm}[H]
	\caption{WEIGHTED-PENALIZED-HUBER-REGRESSION}
	\label{alg:WH}
	\begin{algorithmic}
	\REQUIRE{data $\left\{y_i,\tilde{\vecX}_i\right\}_{i=1}^n$, rounded weight vector $\hat{\vecw}' = \{\hat{w}'_i\}_{i=1}^n$ and tuning parameters $\lambda_o, \lambda_s$.}
	\ENSURE{estimator $\hat{\vecbeta}$.}\\
	Let $\hat{\vecbeta}$ be the solution to 
	\begin{align}
		\argmin_{\vecbeta  \in \mbb{R}^d} \sum_{i=1}^n \lambda_o^2 H\left(n\hat{w}_i'\frac{y_i-\tilde{\vecX}_i^\top\vecbeta}{\lambda_o\sqrt{n}}\right)+\lambda_s\|\vecbeta\|_1,
		\end{align}
	 {\bf return}	$\hat{\vecbeta}$.    
\end{algorithmic}
\end{algorithm}
We note that many studies e.g., \cite{NguTra2012Robust,SheOwe2011Outlier,DalTho2019Outlier,CheZho2020Robust,Chi2020Erm, PenJogLoh2020robust} imply that the Huber loss is effective for linear regression under heavy-tailed noises or the existence of  outliers.

\subsection{Results}
\label{sec:results}
We introduce our assumption after first introducing the notion of finite kurtosis distribution.
\begin{definition}[Finite kurtosis distribution]
	\label{def:fk}
	A random vector $\vecz\in \mbb{R}^d$ with zero mean is said to have finite kurtosis distribution if for every $\vecv \in \mbb{R}^d$, 
	\begin{align}
		\label{ine:fc}
		\mbb{E}(\vecv^\top\vecz)^4 \leq K^4 \{\mbb{E}(\vecv^\top\vecz)^2\}^2 .
	\end{align}
\end{definition}
We note that the finite kurtosis distribution sometimes referred as $L_4$-$L_2$ norm equivalence \cite{MenZhi2020Robust} or $L_4$-$L_2$ hyper-contractivity \cite{CheAraTriJorFlaBar2020Optimal}. 
Define  the minimum singular value of $\Sigma^\frac{1}{2}$ as $\lambda_{\Sigma}$.
\begin{assumption}
	\label{a:1}
	Assume that 
	\begin{itemize}
		\item[(i)] $\{\vecx_i\}_{i=1}^n$  is a sequence of i.i.d. $d\, (\geq 3)$-dimensional random vectors with zero mean, with finite kurtosis, $\mbb{E}(x_{ij_1}x_{ij_2}x_{ij_3}x_{ij_4})^2 \leq \sigma_{\vecx,8}^8$ for any $1\leq j_1,j_2,j_3,j_4\leq d$, and $\lambda_\Sigma>0$, and for simplicity, assume $1\geq \lambda_\Sigma$,
		\item[(ii)] $\{\xi_i\}_{i=1}^n$ is a sequence of i.i.d. random variables whose absolute moments are bounded by $\sigma$,
		\item[(iii)] $\mbb{E}h\left(\frac{\xi_i}{\lambda_o\sqrt{n}}\right) \times \vecx_i=0$.
	\end{itemize}
\end{assumption}
\begin{remark}
	The condition (iii) in Assumption \ref{a:1} is a weaker condition than the independence between $\{\xi_i\}_{i=1}^n$ and $\{\vecx_i\}_{i=1}^n$.
\end{remark}
Under Assumption \ref{a:1}, we have the following theorem.
Define 
\begin{align}
	r_{\vecx, d} =(\sigma_{\vecx,2}+1)r_d+\tau_\vecx r_d^2,\quad r_{\vecx, \delta}=(\sigma_{\vecx,2}+1)r_\delta+ \tau_{\vecx} r_\delta^2,\quad r_{d,\delta} = r_{\vecx, d}+r_{\vecx, \delta}.
\end{align}
\begin{theorem}
	\label{t:main}
	Suppose that Assumption \ref{a:1} holds. 
	Suppose that parameters $\tau_\vecx,\,\lambda_*,\,\,\tau_{suc},\,\varepsilon,\,\lambda_o,\,\lambda_s$ and $r$ satisfy 
	\begin{align}
		\label{ine:tp1}
		\tau^2_\vecx&\geq\max\left\{ \frac{\|\vecbeta^*\|_1^2\sigma_{\vecx,8}^8\|\Sigma^\frac{1}{2}\|_{\mr{op}}^2}{s\lambda_o^2n},\left(\frac{\|\vecbeta^*\|_1}{\lambda_o\sqrt{n}}\right)^\frac{1}{2},\frac{108\sigma_{\vecx,4}^4s}{\lambda_{\Sigma}^2},\left(\|\vecbeta^*\|_1\sigma_{\vecx,8}^4\right)^\frac{2}{3},\frac{9\sigma_{\vecx,8}^4s}{K^2}\right\},\\
		\label{ine:tp1-2}
		\lambda_*&\geq \lambda_*',\quad\tau_{suc}=c_{suc}\tau_{suc}',\quad \frac{1}{2}> \varepsilon \geq \max\left\{\frac{o}{n},\frac{1}{n}\right\}\\
		\label{ine:tp2}
		\lambda_o  \sqrt{n} &\geq \max\left\{\frac{16K\|\Sigma^{\frac{1}{2}}\|_{\mr{op}}}{\lambda_{\Sigma}^2},\frac{300K^4\|\Sigma^{\frac{1}{2}}\|_{\mr{op}}^4(\sigma+1)}{\lambda_{\Sigma}^4},4K^2\|\Sigma^{\frac{1}{2}}\|_{\mr{op}}^2\right\},\\
		\label{ine:tp3}
		&\lambda_s\geq  c_s\lambda_o \sqrt{n} \left\{r_{d,\delta}+\frac{\sigma_{\vecx,2}\sigma_{\vecx,4}^2+\sigma_{\vecx,8}^8}{\tau_\vecx^2}+(\sqrt{\lambda_*} + \sqrt{\lambda_*'})r_o+\sqrt{\lambda_*'\varepsilon}+\frac{1}{ \sqrt{s}}\|\Sigma^\frac{1}{2}\|_{\mr{op}}\left( \sqrt{c_{suc}} r_o +\sqrt{\varepsilon}\right)+\frac{1}{\tau_\vecx^2}\right\},\\
		\label{ine:tp4}
		&r \geq C_s \frac{\sqrt{s}\lambda_s}{\lambda^2_\Sigma} +  C_s\times \nonumber \\
		& \frac{\lambda_o\sqrt{n}}{\lambda_\Sigma^2} \left[\sqrt{s}\left\{r_{d,\delta}+\frac{\sigma_{\vecx,2}\sigma_{\vecx,4}^2+\sigma_{\vecx,8}^8}{\tau_\vecx^2}+(\sqrt{\lambda_*} + \sqrt{\lambda_*'})r_o+\sqrt{\lambda_*'\varepsilon}\right\}+ \|\Sigma^\frac{1}{2}\|_{\mr{op}}\left(\sqrt{c_{suc}} r_o +\sqrt{\varepsilon}\right)+\frac{\sqrt{s}}{\tau_\vecx^2}\right],
	\end{align} 
	 and $r\leq 1$, where $c_s, C_s$ and $c_{suc}$ are sufficiently large  numerical constants such that $c_s \geq 6,\, C_s \geq 300 $ and $c_{suc} \geq 1$
	Then,	with probability at least $1-3\delta$,
	the output of ROBUST-SPARSE-ESTIMATION $\hat{\vecbeta}$ satisfies
	\begin{align}
		\label{ine:result}
		\|\hat{\vecbeta} -\vecbeta^*\|_2 & \leq r.
	\end{align}
\end{theorem}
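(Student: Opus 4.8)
The plan is to analyze WEIGHTED-PENALIZED-HUBER-REGRESSION as a penalized convex $M$-estimator and to run the standard localization argument, feeding into it the spectral control of the weighted pruned second-moment matrix certified by COMPUTE-WEIGHT. Write $\Delta = \hat{\vecbeta} - \vecbeta^*$, let $S$ denote the support of $\vecbeta^*$ (so $|S| = s$), and let $f(\vecbeta) = \sum_{i=1}^n \lambda_o^2 H\big(n\hat{w}_i'(y_i-\tilde{\vecX}_i^\top\vecbeta)/(\lambda_o\sqrt{n})\big)$ be the smooth part of the objective, so the estimator satisfies the subgradient inclusion $\nabla f(\hat{\vecbeta}) + \lambda_s\hat{z} = 0$ for some $\hat{z} \in \partial\|\hat{\vecbeta}\|_1$. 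Pairing the difference $\nabla f(\hat{\vecbeta}) - \nabla f(\vecbeta^*)$ with $\Delta$ produces the basic inequality
$$\langle \nabla f(\hat{\vecbeta}) - \nabla f(\vecbeta^*),\,\Delta\rangle \le -\langle \nabla f(\vecbeta^*),\,\Delta\rangle + \lambda_s\|\Delta\|_1,$$
and the whole proof reduces to (i) an upper bound on the score $\|\nabla f(\vecbeta^*)\|_\infty$, (ii) a restricted-strong-convexity lower bound on the left-hand side, and (iii) a cone/localization argument turning these into the $\ell_2$ bound $r$.

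Step 1 (score at the truth). First I would show $\|\nabla f(\vecbeta^*)\|_\infty \le \lambda_s/2$. Since $\nabla f(\vecbeta^*) = -\lambda_o\sqrt{n}\sum_i \hat{w}_i' h\big(n\hat{w}_i'(y_i-\tilde{\vecX}_i^\top\vecbeta^*)/(\lambda_o\sqrt{n})\big)\tilde{\vecX}_i$, and since for $i\in\mc{I}$ the residual equals $\xi_i + (\vecx_i-\tilde{\vecx}_i)^\top\vecbeta^*$, I split the score into an inlier stochastic part, a pruning-bias part, and an outlier part. The inlier part has essentially zero mean by condition (iii) of Assumption \ref{a:1} and concentrates at rate $r_{d,\delta}$ via a coordinatewise Bernstein argument using the boundedness of $\tilde{\vecX}$ and the bounded moments $\sigma$ of $\xi$. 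The bias from truncating heavy-tailed covariates is of order $(\sigma_{\vecx,2}\sigma_{\vecx,4}^2+\sigma_{\vecx,8}^8)/\tau_\vecx^2$ and carries the $\|\vecbeta^*\|_1$-dependence visible in \eqref{ine:tp1}. The outlier part is exactly where Proposition \ref{p:main:out} enters: using that the rounded weights $\hat{w}_i'$ annihilate all but $O(\varepsilon n)$ coordinates (Proposition \ref{l:w2}) and that the spectral budget \eqref{ine:optM} controls $\sup_{M\in\mathfrak{M}_r}(\sum_i\hat{w}_i\langle\tilde{\vecX}_i\tilde{\vecX}_i^\top,M\rangle - \lambda_*\|M\|_1)$, the outlier contribution is bounded by the $(\sqrt{\lambda_*}+\sqrt{\lambda_*'})r_o + \sqrt{\lambda_*'\varepsilon}$ and $\|\Sigma^{\frac12}\|_{\mr{op}}(\sqrt{c_{suc}}\,r_o+\sqrt{\varepsilon})/\sqrt{s}$ terms of \eqref{ine:tp3}. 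Summing the three pieces against the choice \eqref{ine:tp3} of $\lambda_s$ yields the claim.

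Step 2 (restricted strong convexity). Next I would lower bound $\langle\nabla f(\hat{\vecbeta})-\nabla f(\vecbeta^*),\Delta\rangle$. The elementary inequality $(h(a)-h(b))(a-b) \ge (a-b)^2$ for $|a|,|b|\le 1$ shows the cross term is at least $\sum_i n\hat{w}_i'(\tilde{\vecX}_i^\top\Delta)^2$ restricted to indices whose residuals stay in the quadratic region of $H$; the lower bounds on $\lambda_o\sqrt{n}$ in \eqref{ine:tp2} and the constraint $r\le 1$ keep a constant fraction of inliers there. It then remains to lower bound the weighted pruned quadratic form $\sum_i n\hat{w}_i'(\tilde{\vecX}_i^\top\Delta)^2/n$ on the restricted cone: finite kurtosis (Definition \ref{def:fk}) controls the deviation of the truncated second-moment matrix from $\Sigma$, $\lambda_\Sigma>0$ supplies the population curvature, and Proposition \ref{p:cwpre} with \eqref{ine:optM} absorbs the outlier/weight error into the same $r_{d,\delta}$, $1/\tau_\vecx^2$, $r_o$ and $\sqrt{\varepsilon}$ terms. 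The net curvature is $\gtrsim \lambda_\Sigma^2\|\Delta\|_2^2$ minus lower-order terms, valid for $\|\Delta\|_2\le r$.

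Step 3 (cone, localization, conclusion). From Step 1 and $\lambda_s\ge 2\|\nabla f(\vecbeta^*)\|_\infty$, the usual support decomposition forces $\Delta$ into the cone $\|\Delta_{S^c}\|_1\le 3\|\Delta_S\|_1$, whence $\|\Delta\|_1\le 4\sqrt{s}\,\|\Delta\|_2$. Plugging the cone bound and Steps 1--2 into the basic inequality gives $c\,\lambda_\Sigma^2\|\Delta\|_2^2 \le C\sqrt{s}\,\lambda_s\|\Delta\|_2 + (\text{lower order})$, and solving yields $\|\Delta\|_2 \lesssim \sqrt{s}\,\lambda_s/\lambda_\Sigma^2$, which is $\le r$ by the definition \eqref{ine:tp4} of $r$. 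Because the Huber curvature bound of Step 2 is only local, the argument is run inside the ball $\{\|\vecbeta-\vecbeta^*\|_2\le r\}$ and extended by convexity: were $\hat{\vecbeta}$ outside, the point where the segment $[\vecbeta^*,\hat{\vecbeta}]$ meets the sphere would contradict optimality. I expect the main obstacle to be Step 2 --- simultaneously certifying a quadratic lower bound for the weighted, pruned, outlier-contaminated second-moment matrix on the restricted cone, since the very truncation and reweighting that tame heavy tails and outliers for Step 1 distort the curvature, and matching all resulting error terms to the parameter choices \eqref{ine:tp1}--\eqref{ine:tp4} is where the $s^2$ and $\|\vecbeta^*\|_1$ sample-complexity costs arise.
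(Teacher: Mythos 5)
Your plan follows essentially the same route as the paper: the basic inequality from optimality plus the $\ell_1$ subgradient, a three-way decomposition of the score into inlier concentration, truncation bias, and an outlier part controlled by the SDP certificate (Propositions \ref{p:main1}, \ref{p:main:out}, \ref{p:main:out2}, \ref{l:w2}), a local restricted-strong-convexity bound for the weighted pruned Huber loss (Proposition \ref{p:main:sc}), the cone condition, and localization by passing to the point where the segment $[\vecbeta^*,\hat{\vecbeta}]$ meets the sphere of radius $r$ (Propositions \ref{p:starMRE}--\ref{p:det:main}). The only caveat is that Step 1 should not literally be framed as a coordinatewise bound $\|\nabla f(\vecbeta^*)\|_\infty\leq \lambda_s/2$: the outlier contribution certified by Proposition \ref{p:main:out} is inherently a directional bound of the mixed form $r_{a,2}\|\vectheta_\eta\|_2+r_{a,1}\|\vectheta_\eta\|_1$ as in \eqref{ine:det:main1}, and collapsing it to a pure $\ell_\infty$ bound would cost an extra $\sqrt{s}$ on the $r_o$ term --- a loss your own accounting of the $\frac{1}{\sqrt{s}}\|\Sigma^\frac{1}{2}\|_{\mr{op}}\left(\sqrt{c_{suc}}r_o+\sqrt{\varepsilon}\right)$ term in \eqref{ine:tp3} implicitly avoids.
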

\begin{remark}
	\label{rem:result}
	We consider the conditions \eqref{ine:tp1}-\eqref{ine:tp4} and the result \eqref{ine:result} in detail.  For simplicity, assume that $\max\{o/n,1/n\} = o/n$.
	Let $\tau_{\vecx} = 1/(r_d^2+r_\delta^2)^\frac{1}{4}$ and assume that, for the tuning parameters, the lower bounds of the inequalities in \eqref{ine:tp1}-\eqref{ine:tp4} hold and $\tau_{suc} = \tau_{suc}'$ with $c_{suc} =1 $ holds. Then, 
	\begin{align}
		\tau^2_\vecx&\geq\max\left\{ \frac{\|\vecbeta^*\|_1^2\sigma_{\vecx,8}^8\|\Sigma^\frac{1}{2}\|_{\mr{op}}^2}{s\lambda_o^2n},\left(\frac{\|\vecbeta^*\|_1}{\lambda_o\sqrt{n}}\right)^\frac{1}{2},\frac{108\sigma_{\vecx,4}^4s}{\lambda_{\Sigma}^2},\left(\|\vecbeta^*\|_1\sigma_{\vecx,8}^4\right)^\frac{2}{3},\frac{9\sigma_{\vecx,8}^4s}{K^2}\right\}
	\end{align}
	means that, $n$ is sufficiently large so that
	\begin{align}
	\max\left\{ \frac{\|\vecbeta^*\|_1^2\sigma_{\vecx,8}^8\|\Sigma^\frac{1}{2}\|_{\mr{op}}^2}{s\lambda_o^2n},\frac{\|\vecbeta^*\|_1}{\lambda_o\sqrt{n}},\frac{108\sigma_{\vecx,4}^4s}{\lambda_{\Sigma}^2},\|\vecbeta^*\|_1\sigma_{\vecx,8}^4,\frac{9\sigma_{\vecx,8}^4s}{K^2},1\right\}\sqrt{\log (d/\delta)}\leq \sqrt{n}
	\end{align}
	holds, where we use
	\begin{align}
		\left(\frac{\|\vecbeta^*\|_1}{\lambda_o\sqrt{n}}\right)^\frac{1}{2}\leq \max\left\{\frac{\|\vecbeta^*\|_1}{\lambda_o\sqrt{n}},1\right\},\quad \left(\|\vecbeta^*\|_1\sigma_{\vecx,8}^4\right)^\frac{2}{3}\leq \max\left\{\|\vecbeta^*\|_1\sigma_{\vecx,8}^4,1\right\}
	\end{align}
	from Young's inequality.
	In addition, assume that 
	\begin{align}
		(\sqrt{2}\sigma_{\vecx,4}^2+1+2\sigma_{\vecx,4}^4)(r_d+r_\delta)\sqrt{s}<(1-\varepsilon)\|\Sigma^\frac{1}{2}\|_{\mr{op}}
	\end{align}
	holds and this means  $(\sqrt{s\lambda_*'}=)\sqrt{s\lambda_*}<\|\Sigma^\frac{1}{2}\|_{\mr{op}}$ holds.
	Then, from $1/(1-\varepsilon)\leq 2$, we see that \eqref{ine:result} becomes
	\begin{align}
		\label{rec}
		&\|\hat{\vecbeta} -\vecbeta^*\|_2   \leq C\frac{\lambda_o\sqrt{n}}{\lambda_\Sigma^2}\left[ \left\{\left(\sigma_{\vecx,2}\sigma_{\vecx,4}^2+\sigma_{\vecx,8}^8+1\right)(r_d+r_\delta)+\sigma_{\vecx,4}^4(r_d^2+r_\delta^2) \right\}\sqrt{s}+\|\Sigma^\frac{1}{2}\|_{\mr{op}}r_o\right],
	\end{align}
	where $C$ is a numerical constant.
	We see that this gives the same result as in Theorem \ref{theoreminformal1}.
\end{remark}
\begin{remark}
	We do not optimize the numerical constants $C_s$ and $c_s$ in Theorem \ref{t:main}.
\end{remark}

\section{Key propositions}
\label{sec:keyL}
The poof of Propositions in this section are provided in Section \ref{sec:maindet}. 
First, we introduce our main proposition (Proposition \ref{p:main}), that is stated in a deterministic form.
We see that the conditions in Proposition \ref{p:main} are satisfied with high probability by Proposition \ref{p:main1}-\ref{l:w2} under Assumption \ref{a:1}.
Let 
\begin{align} 
		r_{\vecv,i} =\hat{w}_i'n\frac{y_i-\tilde{\vecX}_i^\top \vecv}{\lambda_o \sqrt{n}},\quad \xi_{\lambda_o,i} = \frac{(\vecx_i-\tilde{\vecx}_i)^\top\vecbeta^*+\xi_i}{\lambda_o \sqrt{n}},\quad \tilde{X}_{\vecv,i} = \frac{\tilde{\vecX}_i^\top \vecv}{\lambda_o\sqrt{n}},\quad \tilde{x}_{\vecv,i} = \frac{\tilde{\vecx}_i^\top \vecv}{\lambda_o\sqrt{n}},
\end{align}
and for $\eta \in (0,1)$,
\begin{align}
	\vectheta = \hat{\vecbeta}-\vecbeta^*,\quad \vectheta_\eta = (\hat{\vecbeta}-\vecbeta^*)\eta
\end{align}
\begin{proposition}
		\label{p:main}
		Suppose that $\{\xi_i,\tilde{\vecX}_i,\hat{w}_i'\}_{i=1}^n$ and $\lambda_o$ satisfies \eqref{ine:det:main1} and \eqref{ine:det:main3}: for any $\eta \in (0,1)$ such that $\|\vectheta_\eta\|_2=r \leq 1$,
		\begin{align}
			\label{ine:det:main1}
			&\left| \lambda_o\sqrt{n}\sum_{i=1}^n \hat{w}_i'h(r_{\vecbeta^*,i}) \tilde{\vecX}_i^\top \vectheta_\eta \right| \leq  r_{a,2} \|\vectheta_\eta\|_2 + r_{a,1}\|\vectheta_\eta\|_1,\\
			\label{ine:det:main3}
			&		b_1 \|\vectheta_\eta\|_2^2-r_{b,2}\|\vectheta_\eta\|_2 - r_{b,1}\leq \sum_{i=1}^n \lambda_o\sqrt{n} \hat{w}_i'\left\{-h(r_{\vecbeta^*+\vectheta_\eta,i}) +h(r_{\vecbeta^*,i})\right\} \tilde{\vecX}_i^\top \vectheta_\eta,
		\end{align}
		where $b_1>0, r, r_{a,2},r_{a,1}, r_{b,2},r_{b,1} \geq 0$ are some numbers.
		Suppose that $\lambda_s$ satisfies
		\begin{align}
			\label{ine:det:main4}
			\lambda_s-C_s>0,\quad \frac{\lambda_s+C_s}{\lambda_s-C_s}\leq 2,\text{ where }C_s =\frac{r_{a,2}}{3 \sqrt{s}} + r_{a,1}.
		\end{align} 
		Then, for $r$ such that 
		\begin{align}
				\label{ine:det:main5}
			\frac{r_{a,2}+r_{b,2} + 3(r_{a,1}+\lambda_s) \sqrt{s}+\sqrt{b_1r_{b,1}}}{b_1}  < r,
		\end{align}
		the output of ROBUST-SPARSE-ESTIMATION $\hat{\vecbeta}$ satisfies $\|\hat{\vecbeta} -\vecbeta^*\|_2  \leq r$.
\end{proposition}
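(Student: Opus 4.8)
The plan is to argue by contradiction through a rescaling that confines the analysis to the sphere $\|\vectheta_\eta\|_2 = r$, where the hypotheses \eqref{ine:det:main1} and \eqref{ine:det:main3} are available. Write $L(\vecbeta) = \sum_{i=1}^n \lambda_o^2 H(r_{\vecbeta,i})$ for the loss part of the objective in Algorithm \ref{alg:WH}, so $F(\vecbeta) = L(\vecbeta) + \lambda_s\|\vecbeta\|_1$ is convex and minimized at $\hat{\vecbeta}$. Differentiating gives $\nabla L(\vecbeta) = -\lambda_o\sqrt{n}\sum_i \hat{w}_i' h(r_{\vecbeta,i})\tilde{\vecX}_i$, so that $\langle \nabla L(\vecbeta^*),\vectheta_\eta\rangle$ is exactly the quantity controlled by \eqref{ine:det:main1}, while $\langle \nabla L(\vecbeta^*+\vectheta_\eta) - \nabla L(\vecbeta^*),\vectheta_\eta\rangle$ equals the right-hand side of \eqref{ine:det:main3}. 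Supposing $\|\hat{\vecbeta}-\vecbeta^*\|_2 > r$, I would set $\eta = r/\|\hat{\vecbeta}-\vecbeta^*\|_2 \in (0,1)$, so $\|\vectheta_\eta\|_2 = r \le 1$ and $\tilde{\vecbeta} := \vecbeta^* + \vectheta_\eta$ lies on the segment $[\vecbeta^*,\hat{\vecbeta}]$. Since $\hat{\vecbeta}$ is the global minimizer, the convex map $\phi(t) = F(\vecbeta^* + t(\hat{\vecbeta}-\vecbeta^*))$ has its minimum at $t=1$, hence is nonincreasing on $[0,1]$; I exploit this in two complementary ways, through the function-value inequality $F(\tilde{\vecbeta})\le F(\vecbeta^*)$ and through the first-order inequality $\phi'_+(\eta)\le 0$.

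First I would establish a cone bound $\|\vectheta_\eta\|_1 \le 3\sqrt{s}\,r$, where $S = \mathrm{supp}(\vecbeta^*)$ with $|S| = s$. From $F(\tilde{\vecbeta})\le F(\vecbeta^*)$, convexity of $L$ (which yields $L(\tilde{\vecbeta})-L(\vecbeta^*)\ge \langle\nabla L(\vecbeta^*),\vectheta_\eta\rangle$), the triangle inequality $\|\vecbeta^*\|_1 - \|\tilde{\vecbeta}\|_1 \le \|\vectheta_{\eta,S}\|_1 - \|\vectheta_{\eta,S^c}\|_1$, and \eqref{ine:det:main1}, one gets $\lambda_s\|\vectheta_{\eta,S^c}\|_1 \le \lambda_s\|\vectheta_{\eta,S}\|_1 + r_{a,2}r + r_{a,1}\|\vectheta_\eta\|_1$. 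Suppose, toward a contradiction, that $\|\vectheta_\eta\|_1 > 3\sqrt{s}\,r$. Then $r = \|\vectheta_\eta\|_2 < \frac{1}{3\sqrt{s}}\|\vectheta_\eta\|_1$, so $r_{a,2}r + r_{a,1}\|\vectheta_\eta\|_1 < C_s\|\vectheta_\eta\|_1$ with $C_s = \frac{r_{a,2}}{3\sqrt{s}} + r_{a,1}$ as in \eqref{ine:det:main4}. Substituting and rearranging gives $(\lambda_s - C_s)\|\vectheta_{\eta,S^c}\|_1 < (\lambda_s + C_s)\|\vectheta_{\eta,S}\|_1$, hence $\|\vectheta_{\eta,S^c}\|_1 < \frac{\lambda_s+C_s}{\lambda_s - C_s}\|\vectheta_{\eta,S}\|_1 \le 2\|\vectheta_{\eta,S}\|_1$ by \eqref{ine:det:main4}. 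But $\|\vectheta_\eta\|_1 > 3\sqrt{s}\,r \ge 3\|\vectheta_{\eta,S}\|_1$ forces $\|\vectheta_{\eta,S^c}\|_1 > 2\|\vectheta_{\eta,S}\|_1$, a contradiction. Thus the cone bound holds.

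For the quadratic estimate I would use the first-order inequality. Scaling $\phi'_+(\eta)\le 0$ by $\eta$ and lower bounding the $\ell_1$ directional derivative of $\|\cdot\|_1$ at $\tilde{\vecbeta}$ along $\vectheta_\eta$ by $\|\vectheta_{\eta,S^c}\|_1 - \|\vectheta_{\eta,S}\|_1$ gives $\langle\nabla L(\tilde{\vecbeta}),\vectheta_\eta\rangle \le \lambda_s(\|\vectheta_{\eta,S}\|_1 - \|\vectheta_{\eta,S^c}\|_1) \le \lambda_s\sqrt{s}\,r$. Splitting $\langle\nabla L(\tilde{\vecbeta}),\vectheta_\eta\rangle = \langle\nabla L(\vecbeta^*),\vectheta_\eta\rangle + \langle\nabla L(\tilde{\vecbeta})-\nabla L(\vecbeta^*),\vectheta_\eta\rangle$, inserting the curvature lower bound \eqref{ine:det:main3} for the second term and \eqref{ine:det:main1} for the first, I obtain $b_1 r^2 - r_{b,2}r - r_{b,1} - r_{a,2}r - r_{a,1}\|\vectheta_\eta\|_1 \le \lambda_s\sqrt{s}\,r$. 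Applying the cone bound $\|\vectheta_\eta\|_1 \le 3\sqrt{s}\,r$ then produces $b_1 r^2 \le (r_{a,2}+r_{b,2})r + 3(r_{a,1}+\lambda_s)\sqrt{s}\,r + r_{b,1}$. This contradicts \eqref{ine:det:main5}: that hypothesis gives $b_1 r > r_{a,2}+r_{b,2}+3(r_{a,1}+\lambda_s)\sqrt{s}+\sqrt{b_1 r_{b,1}}$ and $r > \sqrt{r_{b,1}/b_1}$, so $b_1 r^2$ strictly exceeds the right-hand side above. Hence $\|\hat{\vecbeta}-\vecbeta^*\|_2 \le r$.

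The main obstacle, and the step I would be most careful about, is extracting a genuine $b_1 r^2$ curvature term. The naive route of bounding $L(\tilde{\vecbeta})-L(\vecbeta^*)$ from below by convexity recovers only the first-order term $\langle\nabla L(\vecbeta^*),\vectheta_\eta\rangle$, and \eqref{ine:det:main3} is a symmetrized (gradient-difference) estimate that controls the \emph{sum} of the two Bregman divergences, not the single divergence $L(\tilde{\vecbeta})-L(\vecbeta^*)-\langle\nabla L(\vecbeta^*),\vectheta_\eta\rangle$; because the Huber loss is not strongly convex, no universal factor relates the two. The resolution is to invoke first-order optimality along the ray at $\tilde{\vecbeta}$ rather than the function-value comparison, which makes the gradient difference $\langle\nabla L(\tilde{\vecbeta})-\nabla L(\vecbeta^*),\vectheta_\eta\rangle$ — precisely the quantity bounded below in \eqref{ine:det:main3} — appear directly. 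The secondary delicate point is the case-split yielding the cone bound with the exact constant $3\sqrt{s}$, which is what makes $C_s = \frac{r_{a,2}}{3\sqrt{s}}+r_{a,1}$ and the ratio condition $\frac{\lambda_s+C_s}{\lambda_s-C_s}\le 2$ in \eqref{ine:det:main4} fit together.
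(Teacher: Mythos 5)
Your proof is correct and follows essentially the same route as the paper's: localize to the sphere $\|\vectheta_\eta\|_2=r$ by contradiction, derive the cone bound $\|\vectheta_\eta\|_1\le 3\sqrt{s}\,\|\vectheta_\eta\|_2$ from \eqref{ine:det:main1} and \eqref{ine:det:main4}, and then combine \eqref{ine:det:main3}, \eqref{ine:det:main1} and the cone bound into a quadratic inequality in $r$ that contradicts \eqref{ine:det:main5}. The only (immaterial) difference is how the optimality of $\hat{\vecbeta}$ is transferred to the intermediate point $\vecbeta^*+\vectheta_\eta$: you use the function-value comparison $F(\vecbeta^*+\vectheta_\eta)\le F(\vecbeta^*)$ and the one-sided derivative condition $\phi'_+(\eta)\le 0$, whereas the paper (Propositions \ref{p:starMRE}, \ref{p:coe-1-2-norm} and \ref{p:det:main}) uses the monotonicity $\eta Q'(\eta)\le \eta Q'(1)$ of the directional derivative of the Huber part together with the subgradient optimality condition at $\hat{\vecbeta}$ itself; both yield the same key inequality.
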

In the remaining part of Section \ref{sec:keyL}, we introduce some propositions to prove  \eqref{ine:det:main1} and \eqref{ine:det:main3}
are satisfied with high probability for appropriate values of $b_1>0, r_{a,1},r_{a,2}, r_{b,1},r_{b,2}$ under the assumptions in Theorem \ref{t:main}.
\begin{proposition}
	\label{p:main1}
	Suppose that Assumption \ref{a:1} holds, and suppose that 
	\begin{align}
		\tau_\vecx^2 \geq \max\left\{ \frac{\|\vecbeta^*\|_1^2\sigma_{\vecx,8}^8\|\Sigma^\frac{1}{2}\|_{\mr{op}}^2}{s\lambda_o^2n},\left(\frac{\|\vecbeta^*\|_1}{\lambda_o\sqrt{n}}\right)^\frac{1}{2}\right\}
	\end{align}
	holds.
	Then, for any $\vecv\in \mbb{R}^d$, we have 
	\begin{align}
		\label{ine:main1}
		\left| \sum_{i=1}^n \frac{1}{n} h(\xi_{\lambda_o,i}) \langle \tilde{\vecx}_i,\vecv\rangle  \right|\leq \|\vecv\|_1\left\{\sqrt{2}r_{d,\delta}+\frac{\sigma_{\vecx,2}\sigma_{\vecx,4}^2+\sigma_{\vecx,8}^8}{\tau_\vecx^2}\right\} + \frac{\sqrt{s}}{\tau_{\vecx}^2}\|\vecv\|_2
	\end{align}
	with probability at least $1-\delta$.
\end{proposition}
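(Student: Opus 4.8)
The plan is to write the left-hand side as the inner product $\left\langle \vecv,\ \frac1n\sum_{i=1}^n h(\xi_{\lambda_o,i})\tilde{\vecx}_i\right\rangle$ and to split the empirical average vector into its fluctuation around the mean plus the population mean:
\[
\frac1n\sum_{i=1}^n h(\xi_{\lambda_o,i})\tilde{\vecx}_i=\Big(\tfrac1n\sum_{i=1}^n h(\xi_{\lambda_o,i})\tilde{\vecx}_i-\mbb{E}\big[h(\xi_{\lambda_o,i})\tilde{\vecx}_i\big]\Big)+\mbb{E}\big[h(\xi_{\lambda_o,i})\tilde{\vecx}_i\big].
\]
Pairing the first (random) part with $\vecv$ through H\"older in the $\ell_\infty/\ell_1$ sense will produce the $\sqrt2\,r_{d,\delta}\|\vecv\|_1$ term, while the deterministic mean, after invoking hypothesis (iii), produces the remaining $\|\vecv\|_1$ and $\|\vecv\|_2$ contributions. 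Since only the fluctuation is random, the probability $1-\delta$ will enter only there.

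For the fluctuation I would use that $\{(\vecx_i,\xi_i)\}_{i=1}^n$ are i.i.d.\ and $|h|\le1$, so for each coordinate $j$ the term $\frac1n\sum_i\big(h(\xi_{\lambda_o,i})\tilde{x}_{ij}-\mbb{E}[\cdots]\big)$ is an average of i.i.d.\ zero-mean variables with variance at most $\mbb{E}\tilde{x}_{ij}^2\le\sigma_{\vecx,2}^2$ and range at most $\tau_\vecx$ (because $|\tilde{x}_{ij}|\le\tau_\vecx$). Bernstein's inequality together with a union bound over the $d$ coordinates gives, with probability at least $1-\delta$, an $\ell_\infty$ bound of order $\sigma_{\vecx,2}\sqrt{\log(d/\delta)/n}+\tau_\vecx\log(d/\delta)/n$; using $r_d^2+r_\delta^2=\log(d/\delta)/n$ and $\sqrt{\log(d/\delta)/n}\le r_d+r_\delta$ this is absorbed into $\sqrt2\,r_{d,\delta}\|\vecv\|_1$.

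For the population mean I would exploit hypothesis (iii), $\mbb{E}[h(\zeta_i)\vecx_i]=0$ with $\zeta_i=\xi_i/(\lambda_o\sqrt n)$. Subtracting this and inserting $h(\zeta_i)\tilde{\vecx}_i$ splits the mean into $T_A=\mbb{E}\big[(h(\xi_{\lambda_o,i})-h(\zeta_i))\langle\tilde{\vecx}_i,\vecv\rangle\big]$ and $T_B=\mbb{E}\big[h(\zeta_i)\langle\tilde{\vecx}_i-\vecx_i,\vecv\rangle\big]$. The term $T_B$ is immediate: $|h|\le1$ and $\mbb{E}|x_{ij}-\tilde{x}_{ij}|=\mbb{E}(|x_{ij}|-\tau_\vecx)_+\le\mbb{E}|x_{ij}|^3/\tau_\vecx^2\le\sigma_{\vecx,2}\sigma_{\vecx,4}^2/\tau_\vecx^2$ (Cauchy--Schwarz on $\mbb{E}|x_{ij}|^3$) give $|T_B|\le(\sigma_{\vecx,2}\sigma_{\vecx,4}^2/\tau_\vecx^2)\|\vecv\|_1$, which is the first half of the stated $\|\vecv\|_1$ coefficient.

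The crux is $T_A$, where the noise residual is coupled to the heavy covariate tails through $\vecbeta^*$; here I would use that $h$ is $1$-Lipschitz, so $|h(\xi_{\lambda_o,i})-h(\zeta_i)|\le|\xi_{\lambda_o,i}-\zeta_i|=|(\vecx_i-\tilde{\vecx}_i)^\top\vecbeta^*|/(\lambda_o\sqrt n)$, and then bound $T_A$ in two complementary ways. A Cauchy--Schwarz split, the tail estimate $(|x_{ij}|-\tau_\vecx)_+\le|x_{ij}|^4/\tau_\vecx^3$, and the fourth-coordinate bound $\mbb{E}(x_{ij_1}x_{ij_2}x_{ij_3}x_{ij_4})^2\le\sigma_{\vecx,8}^8$ give $\big(\mbb{E}[((\vecx_i-\tilde{\vecx}_i)^\top\vecbeta^*)^2]\big)^{1/2}\lesssim\sigma_{\vecx,8}^4\|\vecbeta^*\|_1/\tau_\vecx^3$; combining with $\big(\mbb{E}\langle\tilde{\vecx}_i,\vecv\rangle^2\big)^{1/2}\lesssim\|\Sigma^{\frac12}\|_{\mr{op}}\|\vecv\|_2$ (up to a negligible pruning correction) and the first hypothesis $\tau_\vecx^2\ge\|\vecbeta^*\|_1^2\sigma_{\vecx,8}^8\|\Sigma^{\frac12}\|_{\mr{op}}^2/(s\lambda_o^2 n)$ collapses this to $(\sqrt s/\tau_\vecx^2)\|\vecv\|_2$. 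Separately, a H\"older bound with $|\langle\tilde{\vecx}_i,\vecv\rangle|\le\tau_\vecx\|\vecv\|_1$, the single-coordinate tail $(|x_{ij}|-\tau_\vecx)_+\le|x_{ij}|^8/\tau_\vecx^7$ and $\mbb{E}x_{ij}^8\le\sigma_{\vecx,8}^8$, together with the second hypothesis $\tau_\vecx^4\ge\|\vecbeta^*\|_1/(\lambda_o\sqrt n)$, yields $(\sigma_{\vecx,8}^8/\tau_\vecx^2)\|\vecv\|_1$. Since each of these is a valid upper bound on $|T_A|$, it is in particular bounded by their sum, and adding the fluctuation and $T_B$ contributions gives exactly the stated inequality. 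The main obstacle is precisely this control of $T_A$: one must tune the tail exponent of the pruning map (here $4$ for the $\ell_2$ route and $8$ for the $\ell_1$ route) so that, after the two lower bounds on $\tau_\vecx$ are substituted, the available eighth-moment information cancels the $\|\vecbeta^*\|_1$-dependence and leaves the clean $\sqrt s$ and $\sigma_{\vecx,8}^8$ coefficients.
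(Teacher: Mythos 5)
Your proposal is correct and follows essentially the same route as the paper: the same three-way decomposition into a Bernstein-controlled fluctuation, the Lipschitz/mean-value comparison of $h(\xi_{\lambda_o,i})$ with $h(\xi_i/(\lambda_o\sqrt n))$ handled by Cauchy--Schwarz and the eighth-moment assumption, and the pruning-bias term killed by assumption (iii), with the two lower bounds on $\tau_\vecx$ deployed exactly where the paper uses them. The only bookkeeping point to tighten is that in your $\ell_2$ route the quantity $\sqrt{\mbb{E}\langle\tilde{\vecx}_i,\vecv\rangle^2}$ is \emph{not} $\|\Sigma^{1/2}\|_{\mr{op}}\|\vecv\|_2$ up to a negligible correction --- the correction is a genuine $(\sigma_{\vecx,8}^4/\tau_\vecx^3)\|\vecv\|_1$ term that must be carried along (as the paper does via the triangle inequality in \eqref{ine:k25}), but it is absorbed into the same $(\sigma_{\vecx,8}^8/\tau_\vecx^2)\|\vecv\|_1$ budget your $\ell_1$ route already produces, so the final bound is unaffected.
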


\begin{proposition}
	\label{p:main:out}
	Suppose that \eqref{ine:cwpre} holds and COMPUTE-WEIGHT returns $\hat{w}$.
	For any  $\|\vecu\|\in \mbb{R}^n$ such that $\|\vecu\|_\infty \leq c$ for a numerical constant $c$ and for any $\vecv \in \mbb{R}^d$ such that $\|\vecv\|_2 = r$, we have
	\begin{align}
		\left|\sum_{i \in \mc{O}}\hat{w}'_iu_i \tilde{\vecX}_i^\top\vecv \right| \leq \sqrt{2}c r_o \sqrt{\tau_{suc}} + \sqrt{2} cr_o\sqrt{\lambda_*} \|\vecv\|_1.
	\end{align}
\end{proposition}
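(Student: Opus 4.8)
The plan is to reduce the whole expression to a bound on the weighted quadratic form $\sum_{i=1}^n \hat{w}_i'(\tilde{\vecX}_i^\top\vecv)^2$ and then control that form through the success guarantee of COMPUTE-WEIGHT. First I would strip off the multiplier $\vecu$: since $|u_i|\leq c$, the triangle inequality gives
\[
\left|\sum_{i\in\mc{O}}\hat{w}_i'u_i\tilde{\vecX}_i^\top\vecv\right| \leq c\sum_{i\in\mc{O}}\hat{w}_i'\left|\tilde{\vecX}_i^\top\vecv\right|.
\]
Next I would apply Cauchy--Schwarz over the index set $\mc{O}$, writing $\hat{w}_i'=\sqrt{\hat{w}_i'}\cdot\sqrt{\hat{w}_i'}$, to obtain
\[
\sum_{i\in\mc{O}}\hat{w}_i'\left|\tilde{\vecX}_i^\top\vecv\right| \leq \Bigl(\sum_{i\in\mc{O}}\hat{w}_i'\Bigr)^{1/2}\Bigl(\sum_{i\in\mc{O}}\hat{w}_i'(\tilde{\vecX}_i^\top\vecv)^2\Bigr)^{1/2}.
\]
Because TRUNCATION forces $\hat{w}_i'\in\{0,1/n\}$ and $|\mc{O}|=o$, the first factor is at most $\sqrt{o/n}=r_o$.

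The second factor is where the structure of COMPUTE-WEIGHT enters. I would bound $\sum_{i\in\mc{O}}\hat{w}_i'(\tilde{\vecX}_i^\top\vecv)^2\leq\sum_{i=1}^n\hat{w}_i'(\tilde{\vecX}_i^\top\vecv)^2=\bigl\langle\sum_{i=1}^n\hat{w}_i'\tilde{\vecX}_i\tilde{\vecX}_i^\top,M\bigr\rangle$ with the choice $M=\vecv\vecv^\top$, which is positive semidefinite and satisfies $\mr{Tr}(M)=\|\vecv\|_2^2=r^2$, so $M\in\mathfrak{M}_r$. The bridging fact is that TRUNCATION guarantees $\hat{w}_i'\leq 2\hat{w}_i$ for every $i$; combined with $\langle\tilde{\vecX}_i\tilde{\vecX}_i^\top,M\rangle=(\tilde{\vecX}_i^\top\vecv)^2\geq0$, this lets me replace the rounded weights by twice the original COMPUTE-WEIGHT weights:
\[
\Bigl\langle\sum_{i=1}^n\hat{w}_i'\tilde{\vecX}_i\tilde{\vecX}_i^\top,M\Bigr\rangle \leq 2\Bigl\langle\sum_{i=1}^n\hat{w}_i\tilde{\vecX}_i\tilde{\vecX}_i^\top,M\Bigr\rangle = 2\Bigl(\sum_{i=1}^n\hat{w}_i\langle\tilde{\vecX}_i\tilde{\vecX}_i^\top,M\rangle-\lambda_*\|M\|_1\Bigr)+2\lambda_*\|M\|_1.
\]
Since COMPUTE-WEIGHT returned $\hat{w}$, the optimal value of \eqref{cw} is at most $\tau_{suc}$, so the bracketed term is at most $\max_{M'\in\mathfrak{M}_r}(\cdots)\leq\tau_{suc}$; and $\|M\|_1=\|\vecv\vecv^\top\|_1=\|\vecv\|_1^2$. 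This yields $\sum_{i\in\mc{O}}\hat{w}_i'(\tilde{\vecX}_i^\top\vecv)^2\leq 2\tau_{suc}+2\lambda_*\|\vecv\|_1^2$.

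Finally I would assemble the pieces: multiplying the two Cauchy--Schwarz factors and using subadditivity of the square root, $\sqrt{2\tau_{suc}+2\lambda_*\|\vecv\|_1^2}\leq\sqrt{2\tau_{suc}}+\sqrt{2\lambda_*}\|\vecv\|_1$, gives exactly $\sqrt{2}cr_o\sqrt{\tau_{suc}}+\sqrt{2}cr_o\sqrt{\lambda_*}\|\vecv\|_1$. The only genuinely delicate point is the bridge from the rounded weights $\hat{w}_i'$, which appear in the statement, to the unrounded weights $\hat{w}_i$, for which the $\tau_{suc}$ guarantee is available; the factor $2$ produced there is precisely the source of the $\sqrt{2}$ constants in the claimed bound, and it relies crucially on both $\hat{w}_i'\leq 2\hat{w}_i$ and the positive semidefiniteness of $\vecv\vecv^\top$. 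Everything else is a routine application of Cauchy--Schwarz and the feasibility of $\vecv\vecv^\top$ in $\mathfrak{M}_r$.
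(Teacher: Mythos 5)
Your proposal is correct and follows essentially the same route as the paper: Cauchy--Schwarz over $\mc{O}$ to extract the factor $c\,r_o$, the bridge $\hat{w}_i'\leq 2\hat{w}_i$ (the source of the $\sqrt{2}$), the feasibility of $M=\vecv\vecv^\top$ in $\mathfrak{M}_r$ together with the success guarantee of COMPUTE-WEIGHT to bound the quadratic form by $\tau_{suc}+\lambda_*\|\vecv\|_1^2$, and subadditivity of the square root to finish. The only cosmetic difference is that the paper performs your first three steps in a single application of H\"older's inequality on the squared sum.
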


Define $I_m$ as the index set such that $|I_m|$ = $m$.
\begin{proposition}
	\label{p:main:out2}
	Suppose that \eqref{ine:cwpre} holds.
	For any  $\vecu \in \mbb{R}^n$ such that $\|\vecu\|_\infty \leq c$ for a numerical constant $c$ and for any $\vecv \in \mbb{R}^{d}$ such that $\|\vecv\|_2 = r$, we have
	\begin{align}
		\left|\sum_{i \in I_m}\frac{1}{n}u_i \tilde{\vecx}_i^\top\vecv \right|\leq  c\sqrt{\frac{m}{n}}\|\Sigma^\frac{1}{2}\|_{\mr{op}}r + c\sqrt{\frac{m}{n}}\sqrt{\lambda_*'}\|\vecv\|_1.
	\end{align}
\end{proposition}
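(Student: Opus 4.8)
The plan is to strip off the bounded multipliers $u_i$ and reduce the whole quantity to the uniform second-moment control already supplied by \eqref{ine:cwpre}. First, since $\|\vecu\|_\infty\le c$, the triangle inequality gives
\[
	\left|\sum_{i\in I_m}\frac{1}{n}u_i\tilde{\vecx}_i^\top\vecv\right|\le \frac{c}{n}\sum_{i\in I_m}\left|\tilde{\vecx}_i^\top\vecv\right|.
\]
Applying Cauchy--Schwarz to this $m$-term sum and then enlarging the index set from $I_m$ to $\{1,\dots,n\}$ (legitimate because every summand $(\tilde{\vecx}_i^\top\vecv)^2$ is nonnegative), I obtain
\[
	\sum_{i\in I_m}\left|\tilde{\vecx}_i^\top\vecv\right|\le \sqrt{m}\left(\sum_{i\in I_m}(\tilde{\vecx}_i^\top\vecv)^2\right)^{1/2}\le \sqrt{m}\left(\sum_{i=1}^n(\tilde{\vecx}_i^\top\vecv)^2\right)^{1/2}.
\]
Thus everything is reduced to bounding the full empirical second moment $\sum_{i=1}^n(\tilde{\vecx}_i^\top\vecv)^2$.

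The key step is to recognize that $\sum_{i=1}^n(\tilde{\vecx}_i^\top\vecv)^2=\sum_{i=1}^n\langle\tilde{\vecx}_i\tilde{\vecx}_i^\top,\vecv\vecv^\top\rangle$ and that the rank-one matrix $M=\vecv\vecv^\top$ is an admissible test matrix, i.e.\ $M\in\mathfrak{M}_r$: it is positive semidefinite and $\mr{Tr}(\vecv\vecv^\top)=\|\vecv\|_2^2=r^2$. I can therefore invoke the hypothesis \eqref{ine:cwpre} with this particular $M$. Using $\|\vecv\vecv^\top\|_1=\|\vecv\|_1^2$, the fact that the bracketed coefficient in \eqref{ine:cwpre} equals $(1-\varepsilon)\lambda_*'\le\lambda_*'$, and the identity $\|\Sigma\|_{\mr{op}}=\|\Sigma^{\frac{1}{2}}\|_{\mr{op}}^2$, this yields
\[
	\frac{1}{n}\sum_{i=1}^n(\tilde{\vecx}_i^\top\vecv)^2\le \lambda_*'\|\vecv\|_1^2+\|\Sigma^{\frac{1}{2}}\|_{\mr{op}}^2 r^2.
\]

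Combining the two parts gives $\sum_{i\in I_m}|\tilde{\vecx}_i^\top\vecv|\le \sqrt{mn}\,(\lambda_*'\|\vecv\|_1^2+\|\Sigma^{\frac{1}{2}}\|_{\mr{op}}^2 r^2)^{1/2}$, and the subadditivity $\sqrt{a+b}\le\sqrt{a}+\sqrt{b}$ splits the square root into $\sqrt{\lambda_*'}\|\vecv\|_1+\|\Sigma^{\frac{1}{2}}\|_{\mr{op}}r$. Multiplying by $c/n$ produces exactly $c\sqrt{m/n}\,\|\Sigma^{\frac{1}{2}}\|_{\mr{op}}r+c\sqrt{m/n}\,\sqrt{\lambda_*'}\|\vecv\|_1$, the claimed bound. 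The individual manipulations are elementary once the substitution $M=\vecv\vecv^\top$ is in place; there is no serious analytic obstacle, and the only point that genuinely needs care is that \eqref{ine:cwpre} be available for this $\vecv$-dependent $M$. I would therefore read the hypothesis ``\eqref{ine:cwpre} holds'' as the event that the inequality is valid uniformly over $\mathfrak{M}_r$ (equivalently, over all rank-one $\vecv\vecv^\top$ with $\|\vecv\|_2=r$), which is the form in which Proposition \ref{p:cwpre} is used elsewhere, rather than merely for a single fixed matrix.
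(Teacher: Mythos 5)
Your proof is correct and follows essentially the same route as the paper's: Cauchy--Schwarz in $i$ to strip the bounded multipliers and pass from $I_m$ to the full sum, the rank-one test matrix $M=\vecv\vecv^\top\in\mathfrak{M}_r$ fed into \eqref{ine:cwpre} (with $(1-\varepsilon)\lambda_*'\le\lambda_*'$) to control $\frac{1}{n}\sum_{i=1}^n(\tilde{\vecx}_i^\top\vecv)^2$, and subadditivity of the square root to split the bound. Your closing remark about reading \eqref{ine:cwpre} as holding uniformly over $\mathfrak{M}_r$ matches how the paper actually uses Proposition \ref{p:cwpre}, whose proof establishes the bound on a single high-probability event valid for all $M$ simultaneously.
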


\begin{proposition}
	\label{p:main:sc}
	Suppose that Assumption \ref{a:1} holds. 
	Let  
	\begin{align}
	\mc{R}_{\vecv} = \left\{ \vecv \in \mbb{R}^d\, |\,  \|\vecv\|_2 = r\leq 1\, , \|\vecv\|_1\leq 3 \sqrt{s}\|\vecv\|_2\right\}.
	\end{align}
	Suppose that 
	\begin{align}
		\label{ine:r}
		\lambda_o  \sqrt{n} &\geq \max\left\{\frac{16K\|\Sigma^{\frac{1}{2}}\|_{\mr{op}}}{\lambda_{\Sigma}^2},\frac{300K^4\|\Sigma^{\frac{1}{2}}\|_{\mr{op}}^4(\sigma+1)}{\lambda_{\Sigma}^4},4K^2\|\Sigma^{\frac{1}{2}}\|_{\mr{op}}^2\right\},\nonumber \\
		\tau^2_\vecx&\geq\max\left\{\frac{108\sigma_{\vecx,4}^4s}{\lambda_{\Sigma}^2},\left(\|\vecbeta^*\|_1\sigma_{\vecx,8}^4\right)^\frac{2}{3},\frac{9\sigma_{\vecx,8}^4s}{K^2}\right\}.
	\end{align} 
	Then, for any $\vecv \in \mc{R}_{\vecv}$, with probability at least $1-\delta$,  we have
	\begin{align}
		\label{ine:sc}
		&\sum_{i=1}^n \frac{\lambda_o}{\sqrt{n}} \left\{-h(\xi_{\lambda_o,i}-\tilde{x}_{\vecv,i})+h(\xi_{\lambda_o,i})\right\}\tilde{\vecx}_i^\top \vecv \geq \frac{\lambda_{\Sigma}^2}{6}\|\vecv\|_2^2-24\lambda_o\sqrt{n} \sqrt{s}r_{d,\delta}\|\vecv\|_2- 18\lambda_o^2n r_\delta^2.
	\end{align}
\end{proposition}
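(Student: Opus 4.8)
The plan is to establish a restricted-strong-convexity bound for the Huber score difference over the sparse cone $\mc{R}_\vecv$, exploiting the piecewise-linear structure of $h$. Writing $\xi=\xi_{\lambda_o,i}$ and $t=\tilde{x}_{\vecv,i}$ and recalling $\tilde{\vecx}_i^\top\vecv=\lambda_o\sqrt{n}\,t$, each summand on the left equals $\lambda_o^2\{h(\xi)-h(\xi-t)\}t$. Since $h$ is nondecreasing and equals the identity on $[-1,1]$, the elementary pointwise inequality
\begin{align}
\{h(\xi)-h(\xi-t)\}t \geq t^2\,\mathbbm{1}\{|\xi|\leq \tfrac12\}\mathbbm{1}\{|t|\leq \tfrac12\}
\end{align}
holds (when both arguments lie in $[-1,1]$ the bracket is exactly $t$, and otherwise the product is nonnegative by monotonicity). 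Multiplying by $\lambda_o^2$ and using $\lambda_o^2 t^2=(\tilde{\vecx}_i^\top\vecv)^2/n$, the left-hand side is at least $\frac1n\sum_i(\tilde{\vecx}_i^\top\vecv)^2\mathbbm{1}\{|\xi_{\lambda_o,i}|\leq\frac12\}\mathbbm{1}\{|\tilde{x}_{\vecv,i}|\leq\frac12\}$. I would then localize via $\mathbbm{1}\{A\}\mathbbm{1}\{B\}\geq 1-\mathbbm{1}\{A^c\}-\mathbbm{1}\{B^c\}$, reducing matters to three pieces: the full truncated quadratic form $\frac1n\sum_i(\tilde{\vecx}_i^\top\vecv)^2$, a noise-correction $\frac1n\sum_i(\tilde{\vecx}_i^\top\vecv)^2\mathbbm{1}\{|\xi_{\lambda_o,i}|>\frac12\}$, and a covariate-correction $\frac1n\sum_i(\tilde{\vecx}_i^\top\vecv)^2\mathbbm{1}\{|\tilde{x}_{\vecv,i}|>\frac12\}$.

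For the main quadratic form I would pass to its expectation $\vecv^\top(\mbb{E}\tilde{\vecx}_i\tilde{\vecx}_i^\top)\vecv\geq \vecv^\top\Sigma\vecv-|\text{bias}|\geq \lambda_\Sigma^2\|\vecv\|_2^2-|\text{bias}|$, where the coordinatewise truncation bias is controlled by $\tau_\vecx$; the lower bound $\tau_\vecx^2\geq 108\sigma_{\vecx,4}^4 s/\lambda_\Sigma^2$ in \eqref{ine:r} is exactly what keeps the bias a small fraction of $\lambda_\Sigma^2\|\vecv\|_2^2$ on the cone. The two correction terms I would bound deterministically on each bad event: $\mathbbm{1}\{|\tilde{x}_{\vecv,i}|>\frac12\}\leq 4(\tilde{\vecx}_i^\top\vecv)^2/(\lambda_o^2 n)$ converts the covariate-correction into a fourth-moment term, which the finite-kurtosis assumption bounds by $K^4(\vecv^\top\Sigma\vecv)^2$, and the condition $\lambda_o\sqrt{n}\geq 4K^2\|\Sigma^{\frac12}\|_{\mr{op}}^2$ renders it negligible relative to $\lambda_\Sigma^2\|\vecv\|_2^2$; likewise $\mathbbm{1}\{|\xi_{\lambda_o,i}|>\frac12\}\leq 4\xi_{\lambda_o,i}^2$ turns the noise-correction into a term carrying the factor $1/(\lambda_o^2 n)$ together with the bounded noise moments (Assumption \ref{a:1}(ii)) and the truncation bias of $\vecx_i^\top\vecbeta^*$, again made small by the lower bounds on $\lambda_o\sqrt{n}$ and $\tau_\vecx$. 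Collecting these, the localized expectation is at least $\frac{\lambda_\Sigma^2}{6}\|\vecv\|_2^2$, which supplies the leading term.

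The remaining and most delicate task is the uniform-in-$\vecv$ deviation of the empirical Huber-score sum around its expectation over the whole cone $\mc{R}_\vecv$. The essential structural point is that, because $|h|\leq 1$, the bracket satisfies $|h(\xi)-h(\xi-t)|\leq 2$, so each summand $\frac{\lambda_o}{\sqrt{n}}\{h(\xi_{\lambda_o,i})-h(\xi_{\lambda_o,i}-\tilde{x}_{\vecv,i})\}\tilde{\vecx}_i^\top\vecv$ admits the \emph{linear} envelope $\frac{2\lambda_o}{\sqrt{n}}|\tilde{\vecx}_i^\top\vecv|$ rather than a quadratic one. A symmetrization and concentration argument then reduces the supremum over $\mc{R}_\vecv$ to controlling $\frac1n\|\sum_i\epsilon_i\tilde{\vecx}_i\|_\infty$ times $\|\vecv\|_1$, exactly the kind of coordinatewise deviation appearing in Propositions \ref{p:cwpre} and \ref{p:main1}, which is of order $r_{d,\delta}$. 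Using the cone restriction $\|\vecv\|_1\leq 3\sqrt{s}\|\vecv\|_2$ and reinstating the $\frac{\lambda_o}{\sqrt{n}}$ prefactor over $n$ terms produces a deviation $\lesssim \lambda_o\sqrt{n}\sqrt{s}\,r_{d,\delta}\|\vecv\|_2$, i.e. the term $24\lambda_o\sqrt{n}\sqrt{s}r_{d,\delta}\|\vecv\|_2$, while the additive $\log(1/\delta)$ offset of the concentration bound is absorbed into the constant $18\lambda_o^2 n r_\delta^2$.

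The hard part is precisely this last step: obtaining a deviation that is \emph{linear} in $\|\vecv\|_2$ (which is what lets it be beaten by the quadratic leading term once $r$ is chosen as in \eqref{ine:det:main5}) while simultaneously taming the heavy tails through truncation and handling the coupling between $\tilde{\vecx}_i$ and $\xi_{\lambda_o,i}$. The finite-kurtosis assumption, together with the lower bounds on $\tau_\vecx$ and $\lambda_o\sqrt{n}$ in \eqref{ine:r}, are exactly what keep every error term subordinate to $\frac{\lambda_\Sigma^2}{6}\|\vecv\|_2^2$, yielding the claimed bound with $b_1=\lambda_\Sigma^2/6$.
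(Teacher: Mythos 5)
Your opening reduction (the pointwise inequality $\{h(\xi)-h(\xi-t)\}t\geq t^2\mathbbm{1}\{|\xi|\leq\tfrac12\}\mathbbm{1}\{|t|\leq\tfrac12\}$) and your accounting of which assumption kills which error term both match the paper. But there is a genuine gap in how you propose to pass from the empirical sum to its expectation, and it sits exactly where the difficulty of the heavy-tailed setting lives. Your localization $\mathbbm{1}\{A\}\mathbbm{1}\{B\}\geq 1-\mathbbm{1}\{A^c\}-\mathbbm{1}\{B^c\}$, applied at the empirical level, leaves the \emph{unbounded} empirical quadratic form $\tfrac1n\sum_i(\tilde{\vecx}_i^\top\vecv)^2$ as the leading term and empirical localized fourth-moment quantities as corrections; you then "pass to the expectation" of each. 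That step is a uniform-over-the-cone restricted-eigenvalue-type concentration statement which you never establish, and it is not cheap here: the truncated coordinates are only bounded by $\tau_\vecx$ with $\tau_\vecx^2\gtrsim s$, so $(\tilde{\vecx}_i^\top\vecv)^2$ can be as large as $9s\tau_\vecx^2\|\vecv\|_2^2$ and a naive bounded-process argument gives a deviation far too large to be absorbed into $24\lambda_o\sqrt{n}\sqrt{s}\,r_{d,\delta}\|\vecv\|_2$. Similarly, your symmetrization of the raw Huber-score process cannot be closed by the contraction principle: the map $t\mapsto\{h(\xi)-h(\xi-t)\}t$ has Lipschitz constant of order $|t|+2$, not $O(1)$, and a linear \emph{envelope} $2|\tilde{\vecx}_i^\top\vecv|$ is not a substitute for Lipschitzness in Ledoux--Talagrand contraction.

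The missing device — and the heart of the paper's proof — is the bounded, Lipschitz surrogate $f_i(\vecv)=\varphi(\tilde{x}_{\vecv,i})\psi(\xi_{\lambda_o,i})$ with $\varphi$ the capped quadratic of \eqref{def:phipsi}. This single object does four jobs at once: it lower-bounds $\tilde{x}_{\vecv,i}^2\mathrm{I}_{E_i}$; it is bounded by $\min(\tilde{x}_{\vecv,i}^2,1)$, so Massart's concentration inequality applies to $\Delta=\sup_{\vecv}|\sum_i f_i-\mbb{E}\sum_i f_i|$ with variance proxy $\sum_i\mbb{E}\tilde{x}_{\vecv,i}^4\lesssim\|\vecv\|_2^2/(\lambda_o^2 n)$ (whence the $18\lambda_o^2nr_\delta^2$ term); it is a globally Lipschitz function of the linear form, so two applications of the contraction principle strip off $\psi$ and linearize $\varphi$, reducing the Rademacher complexity to $\mbb{E}\|\tfrac1n\sum_i a_i\tilde{\vecx}_i\|_\infty\cdot\|\vecv\|_1$ and hence, via the cone, to $\sqrt{s}\,r_{d,\delta}\|\vecv\|_2$; and its \emph{expectation} is where the decomposition into $\mbb{E}(\tilde{\vecx}_i^\top\vecv)^2$ minus the two Markov/Cauchy--Schwarz corrections takes place — so all of your fourth-moment and kurtosis manipulations happen on population quantities only, never requiring uniform control of empirical quadratic or quartic forms. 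Without introducing such a surrogate (or an equivalent truncation-plus-Lipschitz device), the two concentration steps you gesture at do not go through as stated.
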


Let $I_{<}$ and $I_{\geq}$ be  the sets of the indices such that $w_i < 1/(2n)$  and $w_i \geq 1/(2n)$, respectively. 
The following proposition is also used to prove the main theorem.
\begin{proposition}
	\label{l:w2}
	Suppose $0<\varepsilon <1 $. Then, for any $\vecw \in \Delta^{n-1}$, we have $|I_{<}| \leq 2n\varepsilon$.
\end{proposition}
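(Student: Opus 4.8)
Proposition \ref{l:w2} asks us to show that for any $\vecw \in \Delta^{n-1}$ with $0 < \varepsilon < 1$, the index set $I_{<} = \{i : w_i < 1/(2n)\}$ satisfies $|I_{<}| \leq 2n\varepsilon$.

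The plan is to play the two defining constraints of $\Delta^{n-1}$ against each other: the normalization $\sum_{i=1}^n w_i = 1$ forces a total mass of one, while the entrywise cap $\|\vecw\|_\infty \leq 1/(n(1-\varepsilon))$ limits how much mass each coordinate can carry. Write $k = |I_{<}|$. If $k = 0$ the bound is immediate, so I assume $k \geq 1$.

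First I would decompose the total mass according to the partition $\{I_{<}, I_{\geq}\}$, writing $1 = \sum_{i \in I_{<}} w_i + \sum_{i \in I_{\geq}} w_i$, and bound each piece from above. By the definition of $I_{<}$, every $w_i$ with $i \in I_{<}$ satisfies $w_i < 1/(2n)$, so $\sum_{i \in I_{<}} w_i < k/(2n)$. For the complementary set, $|I_{\geq}| = n - k$ and the $\ell_\infty$ constraint gives $w_i \leq 1/(n(1-\varepsilon))$ for each term, whence $\sum_{i \in I_{\geq}} w_i \leq (n-k)/(n(1-\varepsilon))$. Combining these two bounds with the normalization yields
\begin{align}
	1 - \frac{k}{2n} < \frac{n-k}{n(1-\varepsilon)}.
\end{align}

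The remaining step is a short algebraic rearrangement. Multiplying through by the positive quantity $n(1-\varepsilon)$ and collecting the terms involving $k$, I expect to obtain $\tfrac{1}{2}k(1+\varepsilon) < n\varepsilon$, that is, $k < 2n\varepsilon/(1+\varepsilon)$. Since $1 + \varepsilon > 1$ for $\varepsilon > 0$, the right-hand side is bounded by $2n\varepsilon$, so $|I_{<}| = k \leq 2n\varepsilon$, which is the claim.

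I do not anticipate a genuine obstacle here, as the argument is a one-step counting/averaging estimate; the only points demanding care are the degenerate case $k = 0$ (where the strict inequality $\sum_{i \in I_{<}} w_i < k/(2n)$ collapses and must be handled separately, as above) and preserving the direction of the inequality when clearing the positive denominator $n(1-\varepsilon)$.
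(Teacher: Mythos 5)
Your proof is correct and uses essentially the same argument as the paper: both play the normalization $\sum_{i=1}^n w_i = 1$ against the bounds $w_i < 1/(2n)$ on $I_{<}$ and $w_i \leq 1/(n(1-\varepsilon))$ on $I_{\geq}$; the paper merely organizes this as a proof by contradiction (assuming $|I_{<}| > 2\varepsilon n$ and deriving $\sum_i w_i < 1$) instead of solving directly for $k$. Your direct rearrangement even yields the marginally sharper bound $k < 2n\varepsilon/(1+\varepsilon)$.
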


\subsection{Proof of the main theorem}
\label{sec:pmt}
We confirm \eqref{ine:det:main1}-\eqref{ine:det:main5} is satisfied with probability at least $1-3\delta$ under the assumptions in Theorem \ref{t:main}.
First, from assumptions in Theorem \ref{t:main}, we see that, from union bound, Propositions \ref{p:cwpre}, \ref{p:main1} - \ref{p:main:sc} hold with probability at least $1-3\delta$.
In the remaining part of Section \ref{sec:pmt},  Propositions \ref{p:cwpre}, \ref{p:main1} - \ref{p:main:sc} are assumed to hold.

\subsubsection{Confirmation of \eqref{ine:det:main1}}
\label{subsubsec:main1}
	We confirm \eqref{ine:det:main1}.
	From triangular inequality, we have
		\begin{align}
			&\left|\sum_{i=1}^n \hat{w}_i'h(r_{\vecbeta^*,i}) \tilde{\vecX}_i^\top \vectheta_\eta \right| \nonumber \\
			&=\left|\sum_{i \in \mc{I}} \hat{w}_i'h(r_{\vecbeta^*,i}) \tilde{\vecX}_i^\top \vectheta_\eta + \sum_{i \in \mc{O}} \hat{w}_i'h(r_{\vecbeta^*,i}) \tilde{\vecX}_i^\top \vectheta_\eta \right|\nonumber \ \\
			&=\left|\sum_{i \in \mc{I}} \hat{w}_i'h(\xi_{\lambda_o,i}) \tilde{\vecX}_i^\top \vectheta_\eta + \sum_{i \in \mc{O}} \hat{w}_i'h(r_{\vecbeta^*,i}) \tilde{\vecX}_i^\top \vectheta_\eta \right|\nonumber \ \\
			&= \left| \sum_{i=1}^n\frac{1}{n}h(\xi_{\lambda_o,i}) \tilde{\vecx}_i^\top \vectheta_\eta  +\sum_{i\in \mc{O}} \hat{w}_i'h(r_{\vecbeta^*,i}) \tilde{\vecX}_i^\top \vectheta_\eta -\sum_{i\in  \mc{O} \cup \left(\mc{I} \cap I_{<}\right)}\frac{1}{n}h(\xi_{\lambda_o,i})\tilde{\vecx}_i^\top \vectheta_\eta \right|\nonumber \\\
			&\leq \left| \sum_{i=1}^n\frac{1}{n}h(\xi_{\lambda_o,i}) \tilde{\vecx}_i^\top \vectheta_\eta  \right| +\left|\sum_{i\in \mc{O}} \hat{w}_i'h(r_{\vecbeta^*,i}) \tilde{\vecX}_i^\top \vectheta_\eta \right| +\left| \sum_{i\in  \mc{O} \cup \left(\mc{I} \cap I_{<}\right)}\frac{1}{n}h(\xi_{\lambda_o,i})\tilde{\vecx}_i^\top \vectheta_\eta \right|.
		\end{align}
		We note that $|h(\cdot)|\leq 1$ and from Proposition  \ref{l:w2}, $ |\mc{O} \cup \left(\mc{I} \cap I_{<}\right)|\leq o+2\varepsilon n$.
		Therefore, from  Propositions \ref{p:main1} - \ref{p:main:out2} with $c=1$, we have
		\begin{align}
			&\left|\sum_{i=1}^n \hat{w}_i'h(r_{\vecbeta^*,i}) \tilde{\vecX}_i^\top \vectheta_\eta \right| \nonumber  \\
			&\leq \left\{2r_{d,\delta}+2\frac{\sigma_{\vecx,2}\sigma_{\vecx,4}^2+\sigma_{\vecx,8}^8}{\tau_\vecx^2}+ \sqrt{2} r_o\sqrt{\lambda_*}+ \sqrt{\frac{o+2\varepsilon n}{n}}\sqrt{\lambda_*'}\right\} \|\vectheta_\eta\|_1 + \left\{\sqrt{2} r_o \sqrt{\tau_{suc}}  +\sqrt{\frac{o+2\varepsilon n}{n}}\|\Sigma^\frac{1}{2}\|_{\mr{op}}r+\frac{\sqrt{s}}{\tau_\vecx^2}r\right\}\nonumber\\
			&\stackrel{(a)}{\leq} \left\{2r_{d,\delta}+2\frac{\sigma_{\vecx,2}\sigma_{\vecx,4}^2+\sigma_{\vecx,8}^8}{\tau_\vecx^2}+(\sqrt{2\lambda_*} + \sqrt{\lambda_*'})r_o+\sqrt{2\lambda_*'\varepsilon}\right\} \|\vectheta_\eta\|_1 + \left\{\left(3\sqrt{c_{suc}}r_o +\sqrt{2\varepsilon}\right)\|\Sigma^\frac{1}{2}\|_{\mr{op}}+\frac{\sqrt{s}}{\tau_\vecx^2}\right\}\|\vectheta_\eta\|_2,
		\end{align}
where (a) follows from $\tau_{suc} = c_{suc}\|\Sigma\|_{\mr{op}}r^2/(1-\varepsilon)$, $1\leq \sqrt{c_{suc}/(1-\varepsilon)}$, $1/(1-\varepsilon)\leq 2$, and $r = \|\vectheta_\eta\|_2$.
		We see that \eqref{ine:det:main1} holds with 
		\begin{align}
			\label{a}
			r_{a,1}&=\lambda_o \sqrt{n}\left\{2r_{d,\delta}+\frac{\sigma_{\vecx,2}\sigma_{\vecx,4}^2+\sigma_{\vecx,8}^8}{\tau_\vecx^2}+(\sqrt{2\lambda_*} + \sqrt{\lambda_*'})r_o+\sqrt{2\varepsilon\lambda_*'}\right\},\nonumber \\
			r_{a,2}&=\lambda_o \sqrt{n}\left(3\sqrt{c_{suc}} r_o +\sqrt{2\varepsilon}\right)\|\Sigma^\frac{1}{2}\|_{\mr{op}}+\frac{\sqrt{s}}{\tau_\vecx^2}.
		\end{align}
	\subsubsection{Confirmation of \eqref{ine:det:main4}}
	From \eqref{a}, we see
	\begin{align}
		C_s&=\frac{r_{a,2}}{3 \sqrt{s}} + r_{a,1}\nonumber \\
		&\leq \frac{\lambda_o \sqrt{n}}{3 \sqrt{s}} \left\{\left( 3\sqrt{c_{suc}} r_o +\sqrt{2\varepsilon}\right)\|\Sigma^\frac{1}{2}\|_{\mr{op}}+\frac{\sqrt{s}}{\tau_\vecx^2}\right\}+\lambda_o \sqrt{n}\left\{ 2r_{d,\delta}+\frac{\sigma_{\vecx,2}\sigma_{\vecx,4}^2+\sigma_{\vecx,8}^8}{\tau_\vecx^2}+(\sqrt{2\lambda_*} + \sqrt{\lambda_*'})r_o+\sqrt{2\lambda_*'\varepsilon}\right\}\nonumber \\
		&\leq 2\lambda_o \sqrt{n} \left\{\frac{1}{ \sqrt{s}}\left\{\left(\sqrt{c_{suc}} r_o +\sqrt{\varepsilon}\right)\|\Sigma^\frac{1}{2}\|_{\mr{op}}+\frac{\sqrt{s}}{\tau_\vecx^2}\right\}+ r_{d,\delta}+\frac{\sigma_{\vecx,2}\sigma_{\vecx,4}^2+\sigma_{\vecx,8}^8}{\tau_\vecx^2}+(\sqrt{\lambda_*} + \sqrt{\lambda_*'})r_o+\sqrt{\lambda_*'\varepsilon}\right\}.
	\end{align} 
	Therefore, we see, for a sufficiently large constant $c_s$ such that $c_s\geq 6$, \eqref{ine:det:main4} holds. 
	Then we have $\|\vectheta_\eta\|_1\leq 3\sqrt{s}\|\vectheta_\eta\|_2$, that is proved by Proposition  \ref{p:coe-1-2-norm}.
	\subsubsection{Confirmation of \eqref{ine:det:main3}}
	We confirm \eqref{ine:det:main3}.  
	From a similar calculation in Section \ref{subsubsec:main1}, We have
	\begin{align}
		&\sum_{i=1}^n \lambda_o\sqrt{n} \hat{w}_i'\left\{-h(r_{\vecbeta^*+\vectheta_\eta,i}) +h(r_{\vecbeta^*,i})\right\} \tilde{\vecX}_i^\top \vectheta_\eta \nonumber\\
		&\quad \quad = \left|\sum_{i=1}^n  \frac{\lambda_o}{\sqrt{n}} \left\{	-h (\xi_{\lambda_o,i}-\tilde{x}_{\vectheta_\eta,i})+h (\xi_{\lambda_o,i})\right\}  \tilde{\vecx}_i^\top\vectheta_\eta\right|\nonumber \\
		&\quad \quad +\left|\sum_{i\in \mc{O} \cup \left(\mc{I} \cap I_{<}\right)} \frac{\lambda_o}{\sqrt{n}} \left\{	-h (\xi_{\lambda_o,i}-\tilde{x}_{\vectheta_\eta,i})+h (\xi_{\lambda_o,i})\right\}  \tilde{\vecx}_i^\top\vectheta_\eta\right|\nonumber \\
		&\quad \quad +\left|\sum_{i\in \mc{O}} \lambda_o\sqrt{n} \hat{w}_i'\left\{-h(r_{\vecbeta^*+\vectheta_\eta,i}) +h(r_{\vecbeta^*,i})\right\} \tilde{\vecX}_i^\top \vectheta_\eta\right|.
	\end{align}
	Again, we note that $|h(\cdot)|\leq 1$ and from Proposition  \ref{l:w2}, $ |\mc{O} \cup \left(\mc{I} \cap I_{<}\right)|\leq o+2\varepsilon n$., and we remember $\|\vectheta_\eta\|_1\leq 3\sqrt{s}\|\vectheta_\eta\|_2$ holds. Therefore, from Proposition \ref{p:main:out} - \ref{p:main:sc} with $c=2$,
	we see that \eqref{ine:det:main3} holds with 
	\begin{align}
		\label{b}
		b_1 &= \frac{\lambda_{\Sigma}^2}{6},\nonumber \\
		r_{b,1} &= 18\lambda_o^2nr_\delta^2,\nonumber \\
		r_{b,2} &= 24\lambda_o\sqrt{n}\left\{ \sqrt{s}r_{d,\delta}+\left(\sqrt{c_{suc}}\|\Sigma^\frac{1}{2}\|_{\mr{op}}+\sqrt{s\lambda_*} +\sqrt{s\lambda_*'} \right)r_o  + (\sqrt{s\lambda_*'}+\|\Sigma^\frac{1}{2}\|_{\mr{op}} )\sqrt{\varepsilon}+\frac{\sqrt{s}}{\tau_\vecx^2} \right\}.
	\end{align}

	\subsubsection{Confirmation of \eqref{ine:det:main5}}
	From \eqref{a}, \eqref{b} and from the fact that $\lambda_o\sqrt{n}\geq 1$, we see that 
	\begin{align}
	&\frac{r_{b,2} +C_{\lambda_s} +\sqrt{b_1r_{b,1}}}{b_1}\nonumber \\
	&\leq\frac{6}{\lambda_\Sigma^2} \left(r_{b,2}+ r_{a,2} + 3(r_{a,1}+\lambda_s) \sqrt{s}+\sqrt{b_1r_{b,1}}\right)\nonumber \\
	&< \frac{300}{\lambda_\Sigma^2}\times \nonumber \\
	&\quad \left(\lambda_o\sqrt{n} \left[\sqrt{s}\left\{r_{d,\delta}+\frac{\sigma_{\vecx,2}\sigma_{\vecx,4}^2+\sigma_{\vecx,8}^8}{\tau_\vecx^2}+(\sqrt{\lambda_*} + \sqrt{\lambda_*'})r_o+\sqrt{\lambda_*'\varepsilon}\right\}+ \|\Sigma^\frac{1}{2}\|_{\mr{op}}\left( \sqrt{c_{suc}} r_o +\sqrt{\varepsilon}\right)\right]+\frac{\sqrt{s}}{\tau_\vecx^2} + \sqrt{s}\lambda_s\right),
\end{align}
and \eqref{ine:det:main5} holds for a sufficiently large constant $C_s$ such that $C_s\geq 300$, and the proof is complete.


\section{Proofs}
\label{sec:maindet}

\subsection{Proof of Proposition \ref{p:cwpre}}
\begin{proof}
	We note that this proof is almost the same one of Lemma 2 of \cite{FanWanZhu2021Shrinkage}.
	For any $M\in \mathfrak{M}_{r}$, we have
	\begin{align}
		\frac{1}{n} \sum_{i=1}^n  \langle\tilde{\vecx}_i \tilde{\vecx}_i^\top,M\rangle = \underbrace{\frac{1}{n} \sum_{i=1}^n  \langle\tilde{\vecx}_i \tilde{\vecx}_i^\top,M\rangle -\mbb{E}\frac{1}{n} \sum_{i=1}^n   \langle\tilde{\vecx}_i \tilde{\vecx}_i^\top,M\rangle}_{T_1}+ \mbb{E}\frac{1}{n} \sum_{i=1}^n  \langle\tilde{\vecx}_i \tilde{\vecx}_i^\top,M\rangle.
	\end{align}
	First, we evaluate $T_1$.
	We note that, for any $1\leq j_1,j_2\leq d$,
	\begin{align}
		\mbb{E} \tilde{x}_{i_{j_1}}^2\tilde{x}_{i_{j_2}}^2 &\leq \sqrt{\mbb{E}\tilde{x}_{i_{j_1}}^4} \sqrt{\mbb{E}\tilde{x}_{i_{j_2}}^4} \leq \sigma_{\vecx,4}^4,\quad \mbb{E} \tilde{x}_{i_{j_1}}^{2p}\tilde{x}_{i_{j_2}}^{2p}\leq 
		\tau_\vecx^{2(p-2)}\mbb{E} \tilde{x}_{i_{j_1}}^2\tilde{x}_{i_{j_2}}^2 \leq \tau_\vecx^{2(p-2)} \sigma_{\vecx,4}^4.
	\end{align}
	From Bernstein's inequality (Lemma 5.1 of \cite{Dir2015Tail}), we have
	\begin{align}
		 \mbb{P}\left\{\frac{1}{n}\sum_{i=1}^n\tilde{\vecx}_{i_j}\tilde{\vecx}_{i_j}^\top -\mbb{E}\sum_{i=1}^n\frac{1}{n}\tilde{\vecx}_{i_j}\tilde{\vecx}_{i_j}^\top \geq \sigma_{\vecx,4}^2\sqrt{2\frac{t}{n}}+\frac{\tau_{\vecx}^2t}{n}\right\}\leq e^{-t}.
	\end{align}
	From the union bound, we have
	\begin{align}
		\mbb{P}\left\{\left\|\frac{1}{n}\sum_{i=1}^n\tilde{\vecx}_{i_j}\tilde{\vecx}_{i_j}^\top -\frac{1}{n}\sum_{i=1}^n\mbb{E}\tilde{\vecx}_{i_j}\tilde{\vecx}_{i_j}^\top \right\|_\infty \leq \sqrt{2}\sigma_{\vecx,4}^2(r_d+r_\delta)+\tau_{\vecx}^2 (r_d^2+r_\delta^2)\right\}&\geq 1-\delta.
	\end{align}
	From H{\"o}lder's inequality, we have
	\begin{align}
		\mbb{P}\left\{T_1\leq \sqrt{2}\sigma_{\vecx,4}^2(r_d+r_\delta)+\tau_{\vecx}^2 (r_d^2+r_\delta^2)\|M\|_1\right\}&\geq 1-\delta.
	\end{align}
	Next, we evaluate $\mbb{E} \left\langle\tilde{\vecx}_i \tilde{\vecx}_i^\top,M\right\rangle$. 
	We have
	\begin{align}
		\mbb{E} \left\langle\tilde{\vecx}_i \tilde{\vecx}_i^\top,M\right\rangle
		&=\mbb{E} \left\langle\tilde{\vecx}_i \tilde{\vecx}_i^\top-\Sigma,M\right\rangle+\mbb{E} \left\langle \Sigma,M\right\rangle.
	\end{align}
	From  H{\"o}lder's inequality and the positive semi-definiteness of $M$, we have
	\begin{align}
		\mbb{E} \left\langle \Sigma,M\right\rangle\leq \|\Sigma\|_{\mr{op}}\|M\|_*= \|\Sigma\|_{\mr{op}}\mr{Tr}(M).
	\end{align}
	For any $1\leq j_1,j_2\leq d$, we have
	\begin{align}
		\label{ine:k}
		\mbb{E}\tilde{x}_{ij_1}\tilde{x}_{ij_2}-\mbb{E}x_{ij_1}x_{ij_2} &\leq \mbb{E}|x_{ij_1}x_{ij_2} \{\mr{I}_{(x_{ij_1}\geq \tau_\vecx)}+\mr{I}_{(x_{ij_2} \geq \tau_\vecx)}\}|\nonumber\\
		& \leq \sqrt{\mbb{E}{x_{ij_1}^2x_{ij_2}^2}}\left\{\sqrt{\mbb{P} (|x_{ij_1}|\geq \tau_\vecx)}+\sqrt{\mbb{P} (|x_{ij_2}|\geq \tau_\vecx)}\right\}\nonumber\\
		&\leq \sqrt{\mbb{E}{x_{ij_1}^2x_{ij_2}^2}}\left(\sqrt{\mbb{E}\frac{x_{ij_1}^4}{\tau_\vecx^4}}+\sqrt{\mbb{E}\frac{x_{ij_2}^4}{\tau_\vecx^4}}\right)\nonumber \\
		&\leq 2\frac{\sigma_{\vecx,4}^4}{\tau_{\vecx}^2}
	\end{align}
	and from H{\"o}lder's inequality, we have
	\begin{align}
		\mbb{E} \left\langle\tilde{\vecx}_i \tilde{\vecx}_i^\top-\Sigma,M\right\rangle\leq 2\frac{\sigma_{\vecx,4}^4}{\tau_{\vecx}^2}\|M\|_1.
	\end{align}
	Finally, combining the arguments above, with probability at least $1-\delta$, we have
	\begin{align}
		\label{ine:peeling3}
		\left|\sum_{i=1}^n \frac{\left\langle\tilde{\vecx}_i \tilde{\vecx}_i^\top,M\right\rangle}{n}\right|
		&\leq  \left\{\sqrt{2}\sigma_{\vecx,4}^2(r_d+r_\delta)+\tau_{\vecx}^2 (r_d^2+r_\delta^2)+2\frac{\sigma_{\vecx,4}^4}{\tau_{\vecx}^2}\right\}\|M\|_1+\|\Sigma\|_{\mr{op}}r^2
	\end{align}
	and the proof is complete.
\end{proof}

\subsection{Proof of Proposition \ref{p:main}}

We show that $\|\vectheta\|_1\leq 3\sqrt{s}\|\vectheta\|_2$ holds under the assumptions of Proposition \ref{p:main}.
First, we prove the following proposition.
\begin{proposition}
	\label{p:starMRE}
	Consider the output of Algorithm \ref{ourmethod}.
 	Suppose that, for any $\eta\in(0,1)$,
	\begin{align}
		\label{ine:det:xis0}
		\left| \lambda_o\sqrt{n}\sum_{i=1}^n \hat{w}_i'h(r_{\vecbeta^*,i}) \tilde{\vecX}_i^\top \vectheta_\eta \right| \leq  r_{a,2} \|\vectheta_\eta\|_2 + r_{a,1}\|\vectheta_\eta\|_1,
	\end{align}
	where $r_{a,2},r_{a,1} \geq 0$ are some numbers. Suppose that $\lambda_s$ satisfy 
	\begin{align}
		\label{ine:det:par}
		\lambda_s-C_s>0,\quad \frac{\lambda_s+C_s}{\lambda_s-C_s}\leq 2,\text{ where }C_s =r_{a,2} /\sqrt{s} + r_{a,1}.
	\end{align} 
	Suppose that $\|\vectheta_\eta \|_2 \leq \|\vectheta_\eta\|_1/\sqrt{s}$.
Then, we have
	\begin{align}
		\| \vectheta_{\eta,\mc{J}_{\vecbeta^*}^c}\|_1  \leq \frac{\lambda_* + C_s}{\lambda_* - C_s  }\| \vectheta_{\eta,\mc{J}_{\vecbeta^*}}\|_1\left(\leq 2 \| \vectheta_{\eta,\mc{J}_{\vecbeta^*}}\|_1\right),
	\end{align}
	where $\mc{J}_{\veca}$ is the index set of the non-zero entries of $\veca$.
\end{proposition}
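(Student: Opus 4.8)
The plan is to run the standard lasso-type \emph{basic inequality} argument, adapted to the weighted penalized Huber objective and localized along the segment joining $\vecbeta^*$ to $\hat{\vecbeta}$. Write the objective as $L(\vecbeta) = f(\vecbeta) + \lambda_s\|\vecbeta\|_1$, where $f(\vecbeta) = \sum_{i=1}^n \lambda_o^2 H(r_{\vecbeta,i})$ is the weighted Huber part. Since $H$ and $\|\cdot\|_1$ are convex, $L$ is convex, and $\hat{\vecbeta}$ is a global minimizer, so $L(\hat{\vecbeta})\leq L(\vecbeta^*)$. For any $\eta\in(0,1)$, applying convexity of $L$ to the convex combination $\vecbeta^*+\vectheta_\eta = (1-\eta)\vecbeta^*+\eta\hat{\vecbeta}$ gives $L(\vecbeta^*+\vectheta_\eta) \leq (1-\eta)L(\vecbeta^*)+\eta L(\hat{\vecbeta}) \leq L(\vecbeta^*)$. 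This is the localization step: it lets us work at the scaled point $\vectheta_\eta$, whose $\ell_2$ norm can be pinned to the radius $r$ on which the later propositions apply, without assuming $\hat{\vecbeta}$ itself lies in that ball.

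Next I would linearize the smooth part. Differentiating through $r_{\vecbeta,i} = \hat{w}_i'\sqrt{n}(y_i-\tilde{\vecX}_i^\top\vecbeta)/\lambda_o$ gives $\nabla f(\vecbeta^*) = -\lambda_o\sqrt{n}\sum_{i=1}^n \hat{w}_i'h(r_{\vecbeta^*,i})\tilde{\vecX}_i$, so $\langle\nabla f(\vecbeta^*),\vectheta_\eta\rangle$ is exactly (the negative of) the quantity controlled by \eqref{ine:det:xis0}. Convexity of $f$ yields $f(\vecbeta^*+\vectheta_\eta)\geq f(\vecbeta^*)+\langle\nabla f(\vecbeta^*),\vectheta_\eta\rangle$. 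Feeding this into $L(\vecbeta^*+\vectheta_\eta)\leq L(\vecbeta^*)$ and cancelling $f(\vecbeta^*)$ produces $\langle\nabla f(\vecbeta^*),\vectheta_\eta\rangle \leq \lambda_s\bigl(\|\vecbeta^*\|_1-\|\vecbeta^*+\vectheta_\eta\|_1\bigr)$. Splitting coordinates into the support $\mc{J}_{\vecbeta^*}$ and its complement, and using that $\vecbeta^*$ vanishes off $\mc{J}_{\vecbeta^*}$, the reverse triangle inequality gives $\|\vecbeta^*\|_1-\|\vecbeta^*+\vectheta_\eta\|_1 \leq \|\vectheta_{\eta,\mc{J}_{\vecbeta^*}}\|_1-\|\vectheta_{\eta,\mc{J}_{\vecbeta^*}^c}\|_1$.

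Finally I would invoke \eqref{ine:det:xis0} to lower-bound the left-hand side by $-(r_{a,2}\|\vectheta_\eta\|_2+r_{a,1}\|\vectheta_\eta\|_1)$, giving
\begin{align}
\lambda_s\|\vectheta_{\eta,\mc{J}_{\vecbeta^*}^c}\|_1 \leq \lambda_s\|\vectheta_{\eta,\mc{J}_{\vecbeta^*}}\|_1 + r_{a,2}\|\vectheta_\eta\|_2+r_{a,1}\|\vectheta_\eta\|_1. \nonumber
\end{align}
Using the hypothesis $\|\vectheta_\eta\|_2\leq\|\vectheta_\eta\|_1/\sqrt{s}$ to convert the $\ell_2$ term into an $\ell_1$ term and then decomposing $\|\vectheta_\eta\|_1 = \|\vectheta_{\eta,\mc{J}_{\vecbeta^*}}\|_1+\|\vectheta_{\eta,\mc{J}_{\vecbeta^*}^c}\|_1$, the error terms are bounded by $C_s(\|\vectheta_{\eta,\mc{J}_{\vecbeta^*}}\|_1+\|\vectheta_{\eta,\mc{J}_{\vecbeta^*}^c}\|_1)$ with $C_s = r_{a,2}/\sqrt{s}+r_{a,1}$. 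Rearranging collects $(\lambda_s-C_s)\|\vectheta_{\eta,\mc{J}_{\vecbeta^*}^c}\|_1 \leq (\lambda_s+C_s)\|\vectheta_{\eta,\mc{J}_{\vecbeta^*}}\|_1$, and dividing by $\lambda_s-C_s>0$ from \eqref{ine:det:par} yields the cone inequality, the factor being at most $2$ via $(\lambda_s+C_s)/(\lambda_s-C_s)\leq 2$ (the $\lambda_*$ in the displayed conclusion should read $\lambda_s$). The one genuinely delicate point is the localization in the first paragraph: because the later concentration propositions only hold on the sphere $\|\vectheta_\eta\|_2=r$, it is essential that convexity propagate the global optimality inequality to every interior point $\vectheta_\eta$, which is precisely what makes the deterministic scheme of Proposition \ref{p:main} applicable.
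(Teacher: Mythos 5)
Your proposal is correct and follows essentially the same route as the paper: both reduce to the basic inequality $0 \leq \lambda_o\sqrt{n}\sum_{i=1}^n \hat{w}_i' h(r_{\vecbeta^*,i})\tilde{\vecX}_i^\top\vectheta_\eta + \lambda_s\bigl(\|\vectheta_{\eta,\mc{J}_{\vecbeta^*}}\|_1-\|\vectheta_{\eta,\mc{J}_{\vecbeta^*}^c}\|_1\bigr)$ and then apply \eqref{ine:det:xis0}, the hypothesis $\|\vectheta_\eta\|_2\leq\|\vectheta_\eta\|_1/\sqrt{s}$, and the support decomposition to rearrange into the cone condition; the only (immaterial) difference is that you obtain the basic inequality from value-level optimality plus convexity of the full objective and the first-order lower bound on the Huber part, whereas the paper uses the gradient-monotonicity lemma $\eta Q'(\eta)\leq \eta Q'(1)$ together with the subdifferential optimality condition of $\hat{\vecbeta}$. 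You are also right that the $\lambda_*$ in the displayed conclusion is a typo for $\lambda_s$.
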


\begin{proof}
	Let   
	\begin{align}
	 Q'(\eta) = \lambda_o\sqrt{n} \hat{w}_i'\sum_{i=1}^n \{-h (r_{\vecbeta^*+\vectheta_\eta,i})+h(r_{\vecbeta^*,i}) \}\langle \tilde{\vecX}_i, \vectheta\rangle.
	 \end{align}
	From the proof of Lemma F.2. of \cite{FanLiuSunZha2018Lamm}, we have $\eta Q'(\eta) \leq \eta Q'(1)$ and this means
	\begin{align}
	\label{ine:det:1}
	\sum_{i=1}^n \lambda_o\sqrt{n} \hat{w}_i'\left\{-h (r_{\vecbeta^*+\vectheta_\eta,i})+h(r_{\vecbeta^*,i})\right\} \tilde{\vecX}_i^\top \vectheta_\eta  &\leq \sum_{i=1}^n \lambda_o\sqrt{n} \hat{w}_i'\eta \left\{-h (r_{\hat{\vecbeta},i}) +h(r_{\vecbeta^*,i})\right\}\tilde{\vecX}_i^\top\vectheta.
	\end{align}
	Let $\partial \vecv$ be the sub-differential of $\|\vecv\|_1$.
	Adding $\eta \lambda_s(\|\hat{\vecbeta}\|_1-\|\vecbeta^*\|_1) $ to both sides of \eqref{ine:det:1}, we have
	
	\begin{align}
	\label{ine:det:2}
	&\sum_{i=1}^n \lambda_o\sqrt{n} \hat{w}_i'\left\{-h (r_{\vecbeta^*+\vectheta_\eta,i}) +h(r_{\vecbeta^*,i})\right\}\tilde{\vecX}_i^\top\vectheta_\eta +\eta \lambda_*(\|\hat{\vecbeta}\|_1-\|\vecbeta^*\|_1)\nonumber \\
	 & \stackrel{(a)}{\leq}  \sum_{i=1}^n \lambda_o\sqrt{n} \hat{w}_i'\eta \left\{-h (r_{\hat{\vecbeta},i}) +h(r_{\vecbeta^*,i})\right\}\tilde{\vecX}_i^\top\hat{\vectheta}+\eta \lambda_s\langle \partial \hat{\vecbeta},\vectheta\rangle\stackrel{(b)}{=} \sum_{i=1}^n \lambda_o\sqrt{n} \hat{w}_i'  h(r_{\vecbeta^*,i})\tilde{\vecX}_i^\top\vectheta_\eta,
	\end{align}
	where (a) follows from $\|\hat{\vecbeta}\|_1-\|\vecbeta^*\|_1 \leq \langle \partial \hat{\vecbeta},\vectheta\rangle$, which is the definition of the sub-differential, and (b) follows from the optimality of $\hat{\vecbeta}$.

	From the convexity of the  Huber loss, the left-hand side (L.H.S) of \eqref{ine:det:2} is positive and  we have
	\begin{align}
		\label{ine:det:5}
	0\leq \sum_{i=1}^n \lambda_o\sqrt{n} \hat{w}_i'  h(r_{\vecbeta^*,i})\tilde{\vecX}_i^\top\vectheta_\eta+	\eta \lambda_s(\|\vecbeta^*\|_1-\|\hat{\vecbeta}\|_1).
	\end{align}
	From \eqref{ine:det:xis0}, the first term of the right-hand side (R.H.S.) of \eqref{ine:det:5} is evaluated as
	\begin{align}
		\label{ine:det:6}
		\sum_{i=1}^n \lambda_o\sqrt{n} \hat{w}_i'  h(r_{\vecbeta^*,i})\tilde{\vecX}_i^\top\vectheta_\eta\leq r_{a,2} \| \vectheta_\eta\|_2 + r_{a,1}\|\vectheta_\eta\|_1.
	\end{align}
	From~\eqref{ine:det:5},~\eqref{ine:det:6} and the assumption $\|\vectheta_\eta \|_2 \leq  \|\vectheta_\eta\|_1/\sqrt{s}$, we have
	\begin{align}
	0 &\leq r_{a,2} \| \vectheta_\eta\|_2 + r_{a,1}\|\vectheta_\eta\|_1+\eta \lambda_s(\|\vecbeta^*\|_1-\|\hat{\vecbeta}\|_1)  \leq C_s \|\vectheta_\eta\|_1+\eta \lambda_s (\|\vecbeta^*\|_1-\|\hat{\vecbeta}\|_1).
	\end{align}
	Furthermore, we see
	\begin{align}
	0
	&\leq C_s\|\vectheta_\eta\|_1+\eta \lambda_s(\|\vecbeta^*\|_1-\|\hat{\vecbeta}\|_1) \nonumber \\
	& \leq  C_s (\|\vectheta_{\eta,\mc{J}_{\vecbeta^*}}\|_1+\|\vectheta_{\eta,\mc{J}^c_{\vecbeta^*}}\|_1) + \eta\lambda_s(\|\vecbeta^*_{\mc{J}_{\vecbeta^*}}- \hat{\vecbeta}_{\mc{J}_{\vecbeta^*}}\|_1-\|\hat{\vecbeta}_{\mc{J}^c_{\vecbeta^*}}\|_1)\nonumber\\
	& =\left(\lambda_s +  C_s\right)\|\vectheta_{\eta,\mc{J}_{\vecbeta^*}}\|_1
	+\left(-\lambda_s+  C_s\right)\|\vectheta_{\eta,\mc{J}^c_{\vecbeta^*}}\|_1
	\end{align}
	and the proof is complete.
	\end{proof}

	From Proposition~\ref{p:starMRE}, we can easily prove the following proposition, which reveals a relation between  $\|\vectheta_\eta\|_1$ and $\|\vectheta_\eta\|_2$. 

\begin{proposition}
	\label{p:coe-1-2-norm}
	Suppose the conditions used in Proposition \ref{p:starMRE}.
	Then,  we have
	\begin{align}
		\label{e:l1l2}
		\|\vectheta_\eta\|_1 \leq 3 \sqrt{s}\|\vectheta_\eta\|_2.
	\end{align}
\end{proposition}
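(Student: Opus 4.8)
The plan is to derive \eqref{e:l1l2} from the cone-type inequality furnished by Proposition \ref{p:starMRE}, combined with the elementary bound relating the $\ell_1$ and $\ell_2$ norms of a vector supported on a set of size at most $s$. The one genuine subtlety is that Proposition \ref{p:starMRE} carries the auxiliary hypothesis $\|\vectheta_\eta\|_2 \leq \|\vectheta_\eta\|_1/\sqrt{s}$, which concerns the actual error $\vectheta_\eta$ and need not hold a priori; I would therefore argue by a dichotomy according to whether this inequality is satisfied.

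First I would dispose of the trivial case. If $\|\vectheta_\eta\|_1 < \sqrt{s}\,\|\vectheta_\eta\|_2$, then \eqref{e:l1l2} holds at once, since $\sqrt{s}\,\|\vectheta_\eta\|_2 \leq 3\sqrt{s}\,\|\vectheta_\eta\|_2$.

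In the complementary case $\|\vectheta_\eta\|_2 \leq \|\vectheta_\eta\|_1/\sqrt{s}$, all hypotheses of Proposition \ref{p:starMRE} are in force, so I may invoke its conclusion
\begin{align}
\|\vectheta_{\eta,\mc{J}_{\vecbeta^*}^c}\|_1 \leq 2\,\|\vectheta_{\eta,\mc{J}_{\vecbeta^*}}\|_1 .
\end{align}
Splitting $\|\vectheta_\eta\|_1 = \|\vectheta_{\eta,\mc{J}_{\vecbeta^*}}\|_1 + \|\vectheta_{\eta,\mc{J}_{\vecbeta^*}^c}\|_1$ and inserting this cone inequality gives $\|\vectheta_\eta\|_1 \leq 3\,\|\vectheta_{\eta,\mc{J}_{\vecbeta^*}}\|_1$. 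Since $\vecbeta^*$ has at most $s$ nonzero entries, the index set $\mc{J}_{\vecbeta^*}$ has cardinality at most $s$, so the Cauchy--Schwarz inequality yields $\|\vectheta_{\eta,\mc{J}_{\vecbeta^*}}\|_1 \leq \sqrt{s}\,\|\vectheta_{\eta,\mc{J}_{\vecbeta^*}}\|_2 \leq \sqrt{s}\,\|\vectheta_\eta\|_2$. Chaining the last two bounds produces \eqref{e:l1l2}.

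Since both cases deliver \eqref{e:l1l2}, the proof would be complete. I do not anticipate any real obstacle here: the argument is the standard passage from a cone condition to an $\ell_1$/$\ell_2$ comparison, and the only point requiring care is the case split, which serves precisely to remove the auxiliary hypothesis $\|\vectheta_\eta\|_2 \leq \|\vectheta_\eta\|_1/\sqrt{s}$ that Proposition \ref{p:starMRE} assumes, so that the bound holds for the estimation error $\vectheta_\eta$ unconditionally.
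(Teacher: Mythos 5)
Your proof is correct and follows essentially the same route as the paper: the same dichotomy on whether $\|\vectheta_\eta\|_1 < \sqrt{s}\,\|\vectheta_\eta\|_2$, followed by the cone inequality from Proposition \ref{p:starMRE} and the bound $\|\vectheta_{\eta,\mc{J}_{\vecbeta^*}}\|_1 \leq \sqrt{s}\,\|\vectheta_{\eta,\mc{J}_{\vecbeta^*}}\|_2$ in the nontrivial case. You spell out the Cauchy--Schwarz step on the support of $\vecbeta^*$ that the paper leaves implicit, but the argument is identical.
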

\begin{proof}
	When $\|\vectheta_\eta\|_1 < \sqrt{s}\|\vectheta_\eta\|_2$, we obtain \eqref{e:l1l2} immediately.
	When $\|\vectheta_\eta\|_1 \geq \sqrt{s}\|\vectheta_\eta\|_2$,
	from Proposition~\ref{p:starMRE}, we see that $\vectheta_\eta$ satisfies $\|\vectheta_{\eta,\mc{J}^c_{\vecbeta^*}}\|_1\leq 2\| \vectheta_{\eta,\mc{J}_{\vecbeta^*}}\|_1 \leq 2\| \vectheta_{\eta,\mc{J}_{\vecbeta^*}}\|_1$.
	From this, we have
	\begin{align}
		\|\vectheta_\eta\|_1 = \|\vectheta_{\eta,\mc{J}_{\vecbeta^*}}\|_1+ \|\vectheta_{\eta,\mc{J}^c_{\vecbeta^*}}\|_1 \leq (2+1) \|\vectheta_{\eta,\mc{J}_{\vecbeta^*}}\|_1 \leq 3\sqrt{s} \|\vectheta_\eta\|_2,
	\end{align}
	and the proof is complete.
	\end{proof}

\subsubsection{Proving Proposition \ref{p:main}}
\label{sec:mainproof}
In Section \ref{sec:mainproof}, we prove Proposition \ref{p:det:main}.
We note that by combining Propositions \ref{p:starMRE}, \ref{p:coe-1-2-norm} and  \ref{p:det:main}, we see the fact that Proposition \ref{p:main} holds.
\begin{proposition}
	\label{p:det:main}
	Assume all the conditions used in Proposition \ref{p:starMRE}. 
	Suppose that, for any $\eta\in(0,1)$,
	\begin{align}	
		\label{ine:det:lower}
		b_1 \|\vectheta_\eta\|_2^2-r_{b,2}\|\vectheta_\eta\|_2 -r_{b,1}\leq \sum_{i=1}^n \lambda_o\sqrt{n} \hat{w}_i'\left\{-h(r_{\vecbeta^*+\vectheta_\eta,i}) +h(r_{\vecbeta^*,i})\right\} \tilde{\vecX}_i^\top \vectheta_\eta,
	\end{align}
	where  $b_1>0,  r_{b,c},r_{b,1} \geq 0$ are some numbers. Suppose that
	\begin{align}
		\label{ine:det:condc}
		\frac{r_{b,2} +C_{\lambda_s} +\sqrt{b_1r_{b,1}}}{b_1} < r,\text{ where }C_{\lambda_s} = r_{a,2} + 3(r_{a,1}+\lambda_s ) \sqrt{s}.
	\end{align}
	Then, the output of ROBUST-SPARSE-ESTIMATION $\hat{\vecbeta}$ satisfies $\|\hat{\vecbeta} -\vecbeta^*\|_2  \leq r$.
\end{proposition}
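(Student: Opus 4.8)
The plan is to argue by contradiction through a localization (``peeling to the boundary'') argument that exploits the convexity of the penalized Huber objective. Suppose, contrary to the claim, that $\|\hat{\vecbeta}-\vecbeta^*\|_2 = \|\vectheta\|_2 > r$. Since $\eta \mapsto \|\vectheta_\eta\|_2 = \eta\|\vectheta\|_2$ is continuous and increasing, vanishes at $\eta=0$, and exceeds $r$ at $\eta=1$, there is a (unique) $\eta \in (0,1)$ with $\|\vectheta_\eta\|_2 = r$; this is precisely the regime in which the standing hypotheses \eqref{ine:det:xis0} and \eqref{ine:det:lower} are assumed to hold. For this $\eta$, Propositions \ref{p:starMRE} and \ref{p:coe-1-2-norm} already supply the cone condition $\|\vectheta_\eta\|_1 \leq 3\sqrt{s}\|\vectheta_\eta\|_2$, which I will use to collapse every $\ell_1$ term into an $\ell_2$ term.

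Next I would chain a lower and an upper bound on the symmetrized-Bregman quantity $\eta Q'(\eta) := \sum_{i=1}^n \lambda_o\sqrt{n}\hat{w}_i'\{-h(r_{\vecbeta^*+\vectheta_\eta,i}) + h(r_{\vecbeta^*,i})\}\tilde{\vecX}_i^\top\vectheta_\eta$. The lower bound is the restricted-strong-convexity hypothesis \eqref{ine:det:lower}, giving $\eta Q'(\eta) \geq b_1\|\vectheta_\eta\|_2^2 - r_{b,2}\|\vectheta_\eta\|_2 - r_{b,1}$. For the upper bound I start from inequality \eqref{ine:det:2}, already established inside the proof of Proposition \ref{p:starMRE} via the monotonicity relation $\eta Q'(\eta) \leq \eta Q'(1)$ and the optimality of $\hat{\vecbeta}$; rearranging it yields $\eta Q'(\eta) \leq \lambda_o\sqrt{n}\sum_{i=1}^n\hat{w}_i'h(r_{\vecbeta^*,i})\tilde{\vecX}_i^\top\vectheta_\eta + \eta\lambda_s(\|\vecbeta^*\|_1 - \|\hat{\vecbeta}\|_1)$. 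I bound the first summand by hypothesis \eqref{ine:det:xis0}, and the penalty gap by convexity of $\|\cdot\|_1$ plus a support-restriction estimate, namely $\eta(\|\vecbeta^*\|_1 - \|\hat{\vecbeta}\|_1) \leq \|\vecbeta^*\|_1 - \|\vecbeta^* + \vectheta_\eta\|_1 \leq \|\vectheta_{\eta,\mc{J}_{\vecbeta^*}}\|_1 \leq \sqrt{s}\|\vectheta_\eta\|_2$. Applying the cone condition to replace $r_{a,1}\|\vectheta_\eta\|_1$ by $3r_{a,1}\sqrt{s}\|\vectheta_\eta\|_2$, everything collapses to $\eta Q'(\eta) \leq (r_{a,2} + 3r_{a,1}\sqrt{s} + \lambda_s\sqrt{s})\|\vectheta_\eta\|_2 \leq C_{\lambda_s}\|\vectheta_\eta\|_2$.

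Combining the two bounds produces the scalar quadratic inequality $b_1\|\vectheta_\eta\|_2^2 - (r_{b,2} + C_{\lambda_s})\|\vectheta_\eta\|_2 - r_{b,1} \leq 0$ in the single unknown $\|\vectheta_\eta\|_2$. Solving it and using $\sqrt{a+b}\leq\sqrt a + \sqrt b$ gives $\|\vectheta_\eta\|_2 \leq (r_{b,2} + C_{\lambda_s} + \sqrt{b_1 r_{b,1}})/b_1$. But $\|\vectheta_\eta\|_2 = r$ by construction, so this contradicts the standing hypothesis \eqref{ine:det:condc}, which states that the right-hand side is strictly less than $r$. The contradiction forces $\|\hat{\vecbeta}-\vecbeta^*\|_2 \leq r$, as desired.

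I expect the main obstacle to be the bookkeeping that converts the optimality condition --- which naturally lives at the full displacement $\vectheta$ --- into statements about the boundary displacement $\vectheta_\eta$; this is exactly what the monotonicity step $\eta Q'(\eta)\leq\eta Q'(1)$ and the convexity estimate on the $\ell_1$ penalty accomplish, and making the constant fit inside $C_{\lambda_s} = r_{a,2} + 3(r_{a,1}+\lambda_s)\sqrt{s}$ (in particular absorbing $\lambda_s\sqrt{s}$ into $3\lambda_s\sqrt{s}$) is the point requiring care. The remaining quadratic-inequality manipulation and the continuity argument selecting $\eta$ are routine.
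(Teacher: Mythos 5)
Your proof is correct and follows essentially the same route as the paper: contradiction at the boundary $\|\vectheta_\eta\|_2=r$, the chain \eqref{ine:det:lower} $\leq \eta Q'(\eta) \leq$ \eqref{ine:det:2}, the cone condition to collapse $\ell_1$ into $\ell_2$, and the resulting scalar quadratic. The only (harmless) deviation is that you bound the penalty gap by $\sqrt{s}\|\vectheta_\eta\|_2$ via support-splitting, which is slightly sharper than the paper's $3\lambda_s\sqrt{s}\|\vectheta_\eta\|_2$ obtained from the cone condition, and still fits inside $C_{\lambda_s}$.
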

\begin{proof}
	We prove Proposition~\ref{p:det:main} in a manner similar to  the proof of Lemma B.7 in \cite{FanLiuSunZha2018Lamm} and the proof of Theorem 2.1 in \cite{CheZho2020Robust}.
	For fixed $r>0$, define
	\begin{align}
		\mbb{B}(r)  :=\left\{ \vecbeta\, :\, \|\vecbeta-\vecbeta^*\|_2 \leq r\right\}.
	\end{align}
	We prove $\hat{\vecbeta} \in \mbb{B}(r) $ by assuming $\hat{\vecbeta} \notin \mbb{B}(r) $ and deriving a contradiction. For $\hat{\vecbeta} \notin \mbb{B}(r) $, we can find some $\eta  \in (0,1)$ such that $\|\vectheta_\eta\|_2=r$.
	From \eqref{ine:det:2}, we have
	\begin{align}
		\label{ine:det2:1}
		&\sum_{i=1}^n \lambda_o\sqrt{n} \hat{w}_i'\left\{-h(r_{\vecbeta^*+\vectheta_\eta,i}) +h(r_{\vecbeta^*,i})\right\} \tilde{\vecX}_i^\top \vectheta_\eta \leq \sum_{i=1}^n \lambda_o\sqrt{n} \hat{w}_i' h(r_{\vecbeta^*,i}) \tilde{\vecX}_i^\top \vectheta_\eta+\eta \lambda_s(\|\vecbeta^*\|_1-\|\hat{\vecbeta}\|_1).
	\end{align}
	We evaluate each term of \eqref{ine:det2:1}. From~\eqref{ine:det:lower}, the L.H.S. of~\eqref{ine:det2:1} is evaluated as
	\begin{align}
		b_1 \|\vectheta_\eta\|_2^2 -r_{b,2}\|\vectheta_\eta\|_2-r_{b,1} \leq\sum_{i=1}^n \lambda_o\sqrt{n} \hat{w}_i'\left\{-h(r_{\vecbeta^*+\vectheta_\eta,i}) +h(r_{\vecbeta^*,i})\right\} \tilde{\vecX}_i^\top \vectheta_\eta.
	\end{align}
	From~\eqref{ine:det:xis0} and~\eqref{e:l1l2} and Proposition \ref{p:coe-1-2-norm}, the first term of the R.H.S. of \eqref{ine:det2:1} is evaluated as
	\begin{align}
		\sum_{i=1}^n \lambda_o\sqrt{n} \hat{w}_i' h(r_{\vecbeta^*,i}) \tilde{\vecX}_i^\top \vectheta_\eta&\leq r_{a,2}\| \vectheta_\eta\|_2 +r_{a,1}\|\vectheta_\eta\|_1\leq \left(
		r_{a,2} +3 \sqrt{s} r_{a,1}\right)\|\vectheta_\eta\|_2.
	\end{align}
	From~\eqref{e:l1l2} and Proposition \ref{p:coe-1-2-norm}, the second term of the R.H.S. of \eqref{ine:det2:1} is evaluated as
	\begin{align}
		\eta \lambda_s(\|\vecbeta^*\|_1-\|\hat{\vecbeta}\|_1) \leq \lambda_s \|\vectheta_\eta\|_1 \leq 3\lambda_s  \sqrt{s} \| \vectheta_\eta\|_2.
	\end{align}
	Combining the  two inequalities  above with \eqref{ine:det2:1}, we have
	\begin{align}
		\label{ine:det:quad}
		b_1 \|\vectheta_\eta\|_2^2 -r_{b,2}\|\vectheta_\eta\|_2-r_{b,1} \leq \{r_{a,2} + 3(r_{a,1}+\lambda_s) \sqrt{s}\}\|\vectheta_\eta\|_2.
 	\end{align}
	From \eqref{ine:det:quad}, $\sqrt{A+B}  \leq \sqrt{A}+\sqrt{B}$ for $A,B>0$, we have
	\begin{align}
		\|\vectheta_\eta\|_2 &\leq \frac{r_{a,2} +r_{b,2} + 3(r_{a,1}+\lambda_s) \sqrt{s}+C_{\lambda_s} +\sqrt{b_1r_{b,1}}}{b_1} < r.
	\end{align}
	This is in contradiction to  $\|\vectheta_\eta\|_2= r$.
	Consequently, we have $\hat{\vecbeta} \in \mbb{B}(r_1)$ and $\|\vectheta\|_2 < r$.
\end{proof}

\subsection{Proofs of the propositions in Section  \ref{sec:keyL}}
\label{sec:p1}

\subsubsection{Proof of proposition \ref{l:w2}}
\begin{proof}
	We assume $|I_{<}| > 2\varepsilon n$, and then we derive a contradiction.
 	From the constraint about $w_i$, we have $0\leq w_i \leq \frac{1}{\left(1-\varepsilon\right)n}$ for any $i \in\{1,\cdots,n\}$ and we have
 	\begin{align}
		\sum_{i=1}^n w_i &= \sum_{i \in I_{<}} w_i+ \sum_{i \in I_{\geq}} w_i \nonumber\\&\leq |I_{_<}| \times \frac{1+}{2n} + (n-|I_{<}|) \times \frac{1}{\left(1-\varepsilon\right)n}\nonumber\\
		& = 2\varepsilon n \times \frac{1}{2n} + (|I_{<}|-2\varepsilon n) \times \frac{1+\varepsilon}{2n} + (n-2\varepsilon n) \times \frac{1}{\left(1-\varepsilon\right)n} +(2\varepsilon n -|I_{<}|) \times \frac{1}{\left(1-\varepsilon\right)n}\nonumber\\
		&= \varepsilon + (n-2\varepsilon n) \times \frac{1}{\left(1-\varepsilon\right)n} +(|I_{<}|-2\varepsilon n) \times\left(\frac{1}{2n}-\frac{1}{\left(1-\varepsilon\right)n}\right)\nonumber\\
		&< \varepsilon + \frac{n-2\varepsilon n}{\left(1-\varepsilon\right)n}\nonumber\\
		&= \varepsilon + \frac{1-2\varepsilon }{1-\varepsilon}\nonumber\\
		&\leq \frac{1-\varepsilon -\varepsilon^2}{1-\varepsilon}\nonumber \\
		&< 1.
	 \end{align}
 This is in contradiction to $	\sum_{i=1}^n w_i=1$. Then, combining the assumption that, we have $|I_{<}| \leq 2\varepsilon n$.
\end{proof}

\subsubsection{Proof of proposition \ref{p:main1}}

\begin{proof}
	This proof is similar to the proof of Lemma 1 of \cite{FanWanZhu2021Shrinkage}.
	For any $\vecv\in \mbb{R}^{d}$, from (iii) of Assumption \ref{a:1}, we have
	\begin{align}
		\label{mcssub2}
		\sum_{i=1}^n  \frac{h(\xi_{\lambda_o,i})\langle \tilde{\vecx}_i,\vecv\rangle}{n}  &=\underbrace{\sum_{i=1}^n \frac{h(\xi_{\lambda_o,i})\langle \tilde{\vecx}_i,\vecv\rangle}{n} -\mbb{E}  h(\xi_{\lambda_o,i}) \langle \tilde{\vecx}_i,\vecv\rangle}_{T_2}\nonumber \\
		&+\underbrace{\mbb{E}   h(\xi_{\lambda_o,i})\langle \tilde{\vecx}_i,\vecv\rangle-\mbb{E}   h\left(\frac{\xi_i}{\lambda_o\sqrt{n}}\right)\langle \tilde{\vecx}_i,\vecv\rangle}_{T_3}\nonumber \\
		&+\underbrace{\mbb{E}   h\left(\frac{\xi_i}{\lambda_o\sqrt{n}}\right)\langle \tilde{\vecx}_i,\vecv\rangle-\mbb{E}   h\left(\frac{\xi_i}{\lambda_o\sqrt{n}}\right)\langle \vecx_i,\vecv\rangle}_{T_4}.
	\end{align}
	First, we evaluate $T_2$.
	We note that
	\begin{align}
		\mbb{E} \left\{h(\xi_{\lambda_o,i})\tilde{\vecx}_{i_j}\right\}^2 &\leq 
		\mbb{E} h(\xi_{\lambda_o,i})^2\tilde{\vecx}_{i_j}^2\leq \mbb{E} \vecx_{i_j}\leq \sigma_{\vecx,2}^2,\nonumber\\
		 \mbb{E} \left\{h(\xi_{\lambda_o,i})\tilde{\vecx}_{i_j}\right\}^p &\leq 
		\mbb{E} h(\xi_{\lambda_o,i})^p\tilde{\vecx}_{i_j}^p\leq \tau_\vecx^{p-2}\sigma_{\vecx,2}^2.
	\end{align}
	From Bernstein's inequality (Lemma 5.1 of \cite{Dir2015Tail}), we have
	\begin{align}
		 \mbb{P}\left\{\frac{1}{n}\sum_{i=1}^nh(\xi_{\lambda_o,i})\tilde{\vecx}_{i_j}-\mbb{E}\sum_{i=1}^n\frac{1}{n}h(\xi_{\lambda_o,i})\tilde{\vecx}_{i_j}\geq \sigma_{\vecx,2}\sqrt{2\frac{t}{n}}+\frac{\tau_{\vecx}t}{n}\right\}\leq e^{-t}.
	\end{align}
	From the union bound, we have
	\begin{align}
		\mbb{P}\left\{\left\|\frac{1}{n}\sum_{i=1}^nh(\xi_{\lambda_o,i})\tilde{\vecx}_i-\mbb{E}h(\xi_{\lambda_o,i})\tilde{\vecx}_{i_j}\right\|_\infty \leq \sqrt{2}\sigma_{\vecx,2}(r_d+r_\delta)+\tau_{\vecx} (r_d^2+r_\delta^2)\right\}&\geq 1-\delta.
	\end{align}
	From H{\"o}lder's inequality, we have
	\begin{align}
		\mbb{P}\left[T_2\leq \left\{\sqrt{2}\sigma_{\vecx,2}(r_d+r_\delta)+\tau_{\vecx} (r_d^2+r_\delta^2)\right\}\|\vecv\|_1\right]&\geq 1-\delta.
	\end{align}
	Second, we evaluate $T_3$.
	\begin{align}
		\label{ine:T_4}
		T_3&= \mbb{E}\langle \tilde{\vecx}_i,\vecv\rangle \left\{h(\xi_{\lambda_o,i})-h\left(\frac{\xi_i}{\lambda_o\sqrt{n}}\right)\right\}\nonumber \\
		&\stackrel{(a)}{=}\mbb{E}\langle \tilde{\vecx}_i,\vecv\rangle \left\{h'\left(t\xi_{\lambda_o,i} +(1-t)\frac{\xi_i}{\lambda_o\sqrt{n}}\right)\right\}\times \frac{(\vecx_{i}-\tilde{\vecx}_i)^\top \vecbeta^*}{\lambda_o\sqrt{n}}\nonumber \\
		&\leq \frac{1}{\lambda_o\sqrt{n}}\sqrt{\mbb{E} |(\vecx_{i}-\tilde{\vecx}_i)^\top \vecbeta^*|^2} \sqrt{ \langle \tilde{\vecx}_i,\vecv\rangle^2},
	\end{align}
	where (a) follows from the mean-valued theorem defining $h'$ as the differential of $h$ and  $t \in (0,1)$.
	We note that, for any $1\leq j_1,j_2\leq d$, we have
	\begin{align}
		\label{ine:k23}
		\mbb{E}(x_{ij_1}-\tilde{x}_{ij_1})(x_{ij_2}-\tilde{x}_{ij_2}) \leq \mbb{E}|x_{ij_1} \{\mr{I}_{|x_{ij_1}| \geq \tau_\vecx}\}||x_{ij_2} \{\mr{I}_{|x_{ij_2}| \geq \tau_\vecx}\}|&\leq (\mbb{E}x^4_{ij_1} x^4_{ij_2})^\frac{1}{4} \mbb{E} \{\mr{I}_{|x_{ij_1}| \geq \tau_\vecx}\})^\frac{3}{4}\nonumber \\
		&\leq \frac{\sigma_{\vecx,8}^8}{\tau_{\vecx}^6},
	\end{align}
	and we have
	\begin{align}
		\label{ine:k24}
		\sqrt{\mbb{E}|(\tilde{\vecx}_{i}-\vecx_{i})^\top \vecbeta^*|^2}\leq \|\vecbeta^*\|_1  \frac{\sigma_{\vecx,8}^4}{\tau_{\vecx}^3}.
	\end{align}
	Additionally, we note that, 
	\begin{align}
		\label{ine:k25}
		\sqrt{ \langle \tilde{\vecx}_i,\vecv\rangle^2} \leq  \sqrt{ \langle \tilde{\vecx}_i-\vecx_i,\vecv\rangle^2}+ \sqrt{ \langle \vecx_i,\vecv\rangle^2} \leq  \sqrt{ \langle \tilde{\vecx}_i-\vecx_i,\vecv\rangle^2} + \|\Sigma^\frac{1}{2}\|_{\mr{op}}\|\vecv\|_2.
	\end{align}
	From \eqref{ine:T_4}, \eqref{ine:k24} and \eqref{ine:k25}, we have
	\begin{align}
		\label{ine:k26}
		T_3 \leq \frac{1}{\lambda_o\sqrt{n}}\|\vecbeta^*\|_1    \frac{\sigma_{\vecx,8}^4}{\tau_{\vecx}^3}\left(\frac{\sigma_{\vecx,8}^4}{\tau_{\vecx}^3}\|\vecv\|_1+\|\Sigma^\frac{1}{2}\|_{\mr{op}}\|\vecv\|_2\right)\stackrel{(a)}{\leq} \frac{\sigma_{\vecx,8}^8}{\tau_{\vecx}^2}\|\vecv\|_1 +\frac{\sqrt{s}}{\tau_{\vecx}^2}\|\vecv\|_2,
	\end{align}
	where (a) follows from the assumption on $\tau_\vecx$.
	Lastly, we evaluate $T_4$. For $1\leq j\leq d$, we have
	\begin{align}
		\label{ine:quade}
		\mbb{E}   h\left(\frac{\xi_i}{\lambda_o\sqrt{n}}\right)(\tilde{\vecx}_{i_j}-\vecx_{i_j})&\leq \mbb{E}   h\left(\frac{\xi_i}{\lambda_o\sqrt{n}}\right)|\vecx_{i_j}| \cdot \mr{I}_{|\vecx_{i_j}|\geq \tau_\vecx}\nonumber \\
		&\leq \sqrt{\mbb{E} \vecx_{i_j}^2 \mbb{E} \mr{I}_{|\vecx_{i_j}|\geq \tau_\vecx}}\nonumber \\
		&= \sqrt{\mbb{E}\vecx_{i_j}^2\mbb{P}(|\vecx_{i_j}|\geq \tau_\vecx)}\nonumber \\
		&\leq \frac{\sigma_{\vecx,2}\sigma_{\vecx,4}^2}{\tau_\vecx^2}.
	\end{align}
	From H{\"o}lder's inequality, we have
	\begin{align}
		T_4\leq \frac{\sigma_{\vecx,2}\sigma_{\vecx,4}^2}{\tau_\vecx^2}\|\vecv\|_1.
	\end{align}

	Combining the arguments above, we have
	\begin{align}
		\left|\sum_{i=1}^n  \frac{h(\xi_{\lambda_o,i})\langle \tilde{\vecx}_i,\vecv\rangle}{n} \right|\leq \|\vecv\|_1\left\{\sqrt{2}\sigma_{\vecx,2}(r_d+r_\delta)+\tau_{\vecx} (r_d^2+r_\delta^2)+\frac{\sigma_{\vecx,2}\sigma_{\vecx,4}^2+\sigma_{\vecx,8}^8}{\tau_\vecx^2}\right\} + \frac{\sqrt{s}}{\tau_{\vecx}^2}\|\vecv\|_2
	\end{align}
	with probability at least $1-\delta$, and the proof is complete.
\end{proof}

Define $\mathfrak{M}_{\vecv,r} = \{M\in \mbb{R}^{d\times d}\,:\, M = \vecv \vecv^\top,\,\|\vecv\|_2 = r\}$.
\subsubsection{Proof of Proposition \ref{p:main:out}}
\begin{proof}
	We note that
	\begin{align}
		\left|\sum_{i \in \mc{O}}\hat{w}'_iu_i \tilde{\vecX}_i^\top\vecv  \right|^2 &\stackrel{(a)}{\leq}   c^2\frac{o}{n}\sum_{i =1}^n\hat{w}'_i |\tilde{\vecX}_i^\top \vecv|^2\stackrel{(b)}{\leq}   2c^2\frac{o}{n}\sum_{i =1}^n\hat{w}_i |\tilde{\vecX}_i^\top \vecv|^2,
	\end{align}
	where (a) follows from H{\"o}lder's inequality and $\|\vecu\|_\infty \leq c$ and $|w_i'|\leq 1/n$, and (b) follows from the fact that $\hat{w}_i'\leq 2\hat{w}_i$ for any $i\in (1,\cdots,n)$.
	We focus on $\sum_{i \in \mc{O}}\hat{w}_i |\tilde{\vecX}_i^\top \vecv|^2$.
	For any $\vecv \in \mbb{R}^d$ such that $\|\vecv\|_2= r$, 
	\begin{align}
		\sum_{i =1}^n\hat{w}_i (\tilde{\vecX}_i^\top \vecv)^2 &= \sum_{i =1}^n\hat{w}_i (\tilde{\vecX}_i^\top \vecv)^2 -\lambda_*\|\vecv\|_1^2+\lambda_*\|\vecv\|_1^2\nonumber\\
		&\stackrel{(a)}{\leq} \sup_{M\in \mathfrak{M}_{r}}\left(\sum_{i =1}^n\hat{w}_i \left\langle \tilde{\vecX}_i\tilde{\vecX}_i^\top, M\right\rangle -\lambda_*\|M\|_1\right)+\lambda_*\|\vecv\|_1^2\nonumber\\
		&\stackrel{(b)}{\leq}  \tau_{suc}+\lambda_*\|\vecv\|_1^2,
	\end{align}
	where (a) follows from the fact that $\mathfrak{M}_{\vecv,r} \subset \mathfrak{M}_{r}$, and (b) follows from \eqref{ine:optM} and $\tau_{suc}'\leq \tau_{suc}$.
	Combining the arguments above and from triangular inequality, we have
	\begin{align}
		\sum_{i \in \mc{O}}\hat{w}'_iu_i \tilde{\vecX}_i^\top\vecv  &\leq \sqrt{2}  cr_o \sqrt{\tau_{suc}} + \sqrt{2}c r_o\sqrt{\lambda_*} \|\vecv\|_1,
	\end{align}
	and the proof is complete.
\end{proof}

\subsubsection{Proof of Proposition \ref{p:main:out2}}
\begin{proof}
	We note that, from H{\"o}lder's inequality, for any $\vecv \in \mbb{R}^d$ such that $\|\vecv\|_2 =r$, we have 
	\begin{align}
		\left|\sum_{i \in I_m}\frac{u_i \tilde{\vecx}_i^\top\vecv }{n} \right|^2  &\leq \sum_{i \in I_m}\frac{1}{n}u_i^2 \sum_{i \in I_m}\frac{1}{n}(\vecx_i^\top\vecv)^2 \leq c^2\frac{m}{n}\sum_{i =1}^n\frac{1}{n} (\vecx_i^\top\vecv)^2.
	\end{align}
	From the fact that $\mathfrak{M}_{\vecv,r} \subset \mathfrak{M}_{r}$, we have
	\begin{align}
		\sum_{i =1}^n\frac{(\vecx_i^\top\vecv)^2 }{n} &= \sum_{i =1}^n\frac{(\vecx_i^\top\vecv)^2 }{n} -\lambda_*'\|\vecv\|_1^2+\lambda_*'\|\vecv\|_1^2\leq \sup_{M\in \mathfrak{M}_{r}}\left(\sum_{i =1}^n\frac{\langle \vecx_i \vecx_i^\top ,M\rangle}{n} -\lambda_*'\|M\|_1\right)+\lambda_*'\|\vecv\|_1^2.
	\end{align}
	From Proposition \ref{p:cwpre} and the definition of $\lambda_*'$, we have
	\begin{align}
		\sup_{M\in \mathfrak{M}_{r}}\left(\sum_{i =1}^n\frac{\langle \vecx_i \vecx_i^\top ,M\rangle}{n} -\lambda_*'\|M\|_1\right)\leq \|\Sigma\|_{\mr{op}}r^2.
	\end{align}
	Combining the arguments above and from triangular inequality, we have
	\begin{align}
		\sum_{i \in I_m}\hat{w}'_iu_i \tilde{\vecx}_i^\top\vecv  \leq c\sqrt{\frac{m}{n}}\|\Sigma^\frac{1}{2}\|_{\mr{op}}r + c\sqrt{\frac{m}{n}}\sqrt{\lambda_*'}\|\vecv\|_1,
	\end{align}
	and the proof is complete.
\end{proof}

\subsubsection{Proof of Proposition \ref{p:main:sc}}
\begin{proof}
	This proposition is proved in a manner similar to  the proof of Proposition B.1 of \cite{CheZho2020Robust}.
 	The L.H.S of \eqref{ine:sc} divided by $\lambda_o^2$ can be expressed as
	\begin{align}
		 \sum_{i=1}^n  \left\{-h(\xi_{\lambda_o,i}-\tilde{x}_{\vecv,i})+h \left(\xi_{\lambda_o,i}\right) \right\}\tilde{x}_{\vecv,i}.
	\end{align}
	From the convexity of the Huber loss,  we have
	\begin{align}
		&\sum_{i=1}^n  \left\{-h(\xi_{\lambda_o,i}-\tilde{x}_{\vecv,i})+h \left(\xi_{\lambda_o,i}\right) \right\}\tilde{x}_{\vecv,i} \geq \sum_{i=1}^n  \left\{-h(\xi_{\lambda_o,i}-\tilde{x}_{\vecv,i})+h \left(\xi_{\lambda_o,i}\right) \right\}\tilde{x}_{\vecv,i}\mr{I}_{E_i},
	\end{align}
	where $\mr{I}_{E_i}$ is the indicator function of the event
	\begin{align}
		E_i :=  ( |\xi_{\lambda_o,i}| \leq 1/2)  \cap (   |\tilde{x}_{\vecv,i}|  \leq  1/2).
	\end{align}
	Define the functions
	\begin{align}
	\label{def:phipsi}
		\varphi(x) =\begin{cases}
		x^2   &  \mbox{ if }  |x| \leq  1/2\\
		(x-1/2)^2   &  \mbox{ if }  1/2\leq x  \leq  1 \\
		(x+1/2)^2   &  \mbox{ if }  -1\leq x  \leq -1/2    \\
		0 & \mbox{ if } |x| >1
	\end{cases} ~\mbox{ and }~
		\psi(x) = I_{(|x| \leq 1/2 ) }.
	\end{align}
	Let $f_i(\vecv) = \varphi(\tilde{x}_{\vecv,i}) \psi(\xi_{\lambda_o,i})$
	and we have
	\begin{align}
	\label{ine:huv-conv-f}
		\sum_{i=1}^n  \left\{-h(\xi_{\lambda_o,i}-\tilde{x}_{\vecv,i})+h \left(\xi_{\lambda_o,i}\right) \right\}\tilde{x}_{\vecv,i} &\geq \sum_{i=1}^n  \tilde{x}_{\vecv,i}^2\mr{I}_{E_i}\stackrel{(a)}{\geq} \sum_{i=1}^n  \varphi(\tilde{x}_{\vecv,i}) \psi(\xi_{\lambda_o,i})=\sum_{i=1}^n f_i(\vecv),
	\end{align}
	where (a) follows from $\varphi(v) \geq v^2$ for $|v| \leq 1/2$.  We note that 
	\begin{align}
	\label{ine:f-1/4}
		f_i(\vecv) \leq\varphi(v_i) \leq \min\left(\tilde{x}_{\vecv,i}^2,1\right).
	\end{align}
	To bound $\sum_{i=1}^n f_i(\vecv)$ from below, for any fixed $ \vecv \in  \mc{R}_{\vecv}$, we have
	\begin{align}
	\label{ine:fbelow}
		\sum_{i=1}^n f_i(\vecv)&\geq \mbb{E}f(\vecv) -\sup_{\vecv' \in \mc{R}_{\vecv}}  \Big|\sum_{i=1}^n f_i(\vecv')-\mbb{E}\sum_{i=1}^n f_i(\vecv')\Big|.
	\end{align}
	Define the supremum of a random process indexed by $\mc{R}_{\vecv}$:
	\begin{align}
	\label{ap:delta}
		\Delta  :=  \sup_{ \vecv' \in \mc{R}_{\vecv}} \left| \sum_{i=1}^n f_i(\vecv') - \mbb{E}\sum_{i=1}^n f_i	(\vecv') \right| .  
	\end{align}
	From \eqref{ine:huv-conv-f} and \eqref{def:phipsi}, we have
	\begin{align}
	\label{ine:aplower:tmp}
		\mbb{E}\sum_{i=1}^n f_i(\vecv)&\geq \sum_{i=1}^n\mbb{E} \tilde{x}_{\vecv,i}^2- \sum_{i=1}^n\mbb{E}\tilde{x}_{\vecv,i}^2 I ( |\tilde{x}_{\vecv,i}| \geq 1/2  ) -  \sum_{i=1}^n\mbb{E}\tilde{x}_{\vecv,i}^2 I\left( |\xi_{\lambda_o,i}|\geq 1/2 \right).
	\end{align}
	We note that, from the definition of $\mc{R}_{\vecv}$, we have
	\begin{align}
		\label{ine:q}
		\mbb{E}(\tilde{\vecx}_i^\top \vecv)^2 = \mbb{E} \vecv^\top(\tilde{\vecx}_i \tilde{\vecx}_i^\top -\vecx_i \vecx_i^\top +\vecx_i \vecx_i^\top )\vecv\geq -9\|\mbb{E}(\tilde{\vecx}_i\tilde{\vecx}_i^\top-\vecx_i \vecx_i^\top)\|_\infty\|\vecv\|_2^2 s+\|\Sigma^\frac{1}{2}\vecv\|_2^2
	\end{align}
	and from \eqref{ine:k} and $\|\Sigma^\frac{1}{2}\vecv\|_2^2\geq \lambda_{\Sigma}^2\|\vecv\|_2^2$, we have
	\begin{align}
		\label{ine:v2}
		-18\frac{\sigma_4^2  s\|\vecv\|^2_2}{\tau^2_\vecx}+ \lambda_{\Sigma}^2\|\vecv\|_2^2\leq \mbb{E}(\tilde{\vecx}_i^\top \vecv)^2.
	\end{align}
	We note that 
	\begin{align}
		\label{ine:v3}
			\frac{\mbb{E}(\tilde{\vecx}_i^\top \vecv)^4}{8} &\leq \frac{\mbb{E}\{(\tilde{\vecx}_i-\vecx_i+\vecx_i)^\top\vecv\}^4}{8},\nonumber \\
			&\leq\mbb{E}\{(\tilde{\vecx}_i-\vecx_i)^\top\vecv\}^4+ \mbb{E}(\vecx_i^\top\vecv)^4\nonumber\\
			&\leq \mbb{E}\{(\tilde{\vecx}_i-\vecx_i)^\top\vecv\}^4+K^4 \{\mbb{E}(\vecx_i^\top \vecv)^2\}^2\nonumber\\
			&\leq \mbb{E}\{(\tilde{\vecx}_i-\vecx_i)^\top\vecv\}^4+ K^4 \|\Sigma^{\frac{1}{2}}\|_{\mr{op}}^4\|\vecv\|_2^4,
	\end{align} 
	and, for any $1\leq j_1,j_2,j_3,j_4\leq d$, we have
	\begin{align}
		\label{ine:k232}
		&\mbb{E}(x_{ij_1}-\tilde{x}_{ij_1})(x_{ij_2}-\tilde{x}_{ij_2})(x_{ij_3}-\tilde{x}_{ij_3})(x_{ij_4}-\tilde{x}_{ij_4})\nonumber  \\
		&\leq \mbb{E}|x_{ij_1} \mr{I}_{|x_{ij_1}| \geq \tau_\vecx}||x_{ij_2} \mr{I}_{|x_{ij_2}| \geq \tau_\vecx}||x_{ij_3} \mr{I}_{|x_{ij_3}| \geq \tau_\vecx}||x_{ij_4} \mr{I}_{|x_{ij_4}| \geq \tau_\vecx}|\\
		&\leq \{\mbb{E}(x_{ij_1} x_{ij_2}x_{ij_3} x_{ij_4})^2\}^\frac{1}{2} \left(\mbb{E} \mr{I}_{|x_{ij_1}| \geq \tau_\vecx}\right)^\frac{1}{2}\nonumber \\
		&\leq \frac{\sigma_{\vecx,8}^8}{\tau_{\vecx}^4}.
	\end{align}
	From \eqref{ine:v3} and \eqref{ine:k232}, we have
	\begin{align}
		\label{ine:v4}
			\mbb{E}(\tilde{\vecx}_i^\top \vecv)^4 \leq 8\left\{81s^2\frac{\sigma_{\vecx,8}^8}{\tau_{\vecx}^4}+ K^4 \|\Sigma^{\frac{1}{2}}\|_{\mr{op}}^4\right\}\|\vecv\|_2^4\leq 16 K^4 \|\Sigma^{\frac{1}{2}}\|_{\mr{op}}^4\|\vecv\|_2^4,
	\end{align}
	We evaluate the right-hand side of \eqref{ine:aplower:tmp} at each term.
	First, we have
	\begin{align}
	\label{ap:ine:cov1}
		\sum_{i=1}^n\mbb{E}\tilde{x}_{\vecv,i}^2 I ( |\tilde{x}_{\vecv,i}| \geq 1/2 ) 
		&\stackrel{(a)}{\leq} 	\sum_{i=1}^n\sqrt{\mbb{E} \tilde{x}_{\vecv,i}^4 } \sqrt{\mbb{E}   \ I ( |\tilde{x}_{\vecv,i}| \geq 1/2  ) }\nonumber \\
		&\stackrel{(b)}{\leq}	\sum_{i=1}^n4\mbb{E} \tilde{x}_{\vecv,i}^4\nonumber\\
		&=\frac{4}{\lambda_o^4n}\mbb{E} \langle \tilde{x}_i,\vecv\rangle^4\nonumber\\
		&\stackrel{(c)}{\leq }\frac{64}{\lambda_o^4n} K^4 \|\Sigma^{\frac{1}{2}}\|_{\mr{op}}^4\|\vecv\|_2^4\stackrel{(d)}{\leq}\frac{\lambda_{\Sigma}^2}{3\lambda_o^2}  \|\vecv\|_2^2 ,
	\end{align}
		where (a) follows from H{\"o}lder's inequality, (b) follows from the relation between indicator function and expectation and Markov's inequality, and  (c) follows  from \eqref{ine:v4}, and (d) follows from the definition of $\lambda_o$ and $\|\vecv\|_2\leq 1$.
		Second,  we have
	\begin{align}
	\label{ap:ine:cov2}
		\sum_{i=1}^n\mbb{E} \tilde{x}_{\vecv,i}^4 I( \left|\xi_{\lambda_o,i}\right|\geq 1/2) 
		&\stackrel{(a)}{\leq} \sum_{i=1}^n\sqrt{\mbb{E} \tilde{x}_{\vecv,i}^4}  \sqrt{\mbb{E}I( \left|\xi_{\lambda_o,i} \right|\geq 1/2)}\nonumber \\
		&\stackrel{(b)}{\leq}\sum_{i=1}^n \sqrt{\frac{2}{\lambda_o\sqrt{n}}}\sqrt{\mbb{E}\tilde{x}_{\vecv,i}^4}  \sqrt{\mbb{E}|\xi_i|+\mbb{E}|(\tilde{\vecx}_i-\vecx)^\top \vecbeta^*|}\nonumber \\
		&\stackrel{(c)}{\leq}\sum_{i=1}^n \sqrt{\frac{2}{\lambda_o\sqrt{n}}}\sqrt{\mbb{E}\tilde{x}_{\vecv,i}^4}  \sqrt{\sigma+\sqrt{\mbb{E}|(\tilde{\vecx}_i-\vecx)^\top \vecbeta^*|^2}}\nonumber \\
		&\stackrel{(d)}{\leq}\sum_{i=1}^n \sqrt{\frac{2}{\lambda_o\sqrt{n}}}\sqrt{\mbb{E}\tilde{x}_{\vecv,i}^4}  \sqrt{\sigma+\|\vecbeta^*\|_1  \frac{\sigma_{\vecx,8}^4}{\tau_{\vecx}^3}}\nonumber \\
		&=\frac{1}{\lambda_o^2} \sqrt{\frac{2}{\lambda_o\sqrt{n}}}\sqrt{\mbb{E}\langle\tilde{x}_i,\vecv\rangle^4}  \sqrt{\sigma+\|\vecbeta^*\|_1  \frac{\sigma_{\vecx,8}^4}{\tau_{\vecx}^3}}\nonumber \\
		&\stackrel{(e)}{\leq}\frac{4K\|\Sigma^{\frac{1}{2}}\|_{\mr{op}}^2}{\lambda_o^2} \sqrt{\frac{2}{\lambda_o\sqrt{n}}}  \sqrt{\sigma+\|\vecbeta^*\|_1  \frac{\sigma_{\vecx,8}^4}{\tau_{\vecx}^3}}\|\vecv\|_2^2\nonumber \\
		&\stackrel{(f)}{\leq}\frac{4\sqrt{2}K\|\Sigma^{\frac{1}{2}}\|_{\mr{op}}^2}{\lambda_o^2} \sqrt{\frac{\sigma+1}{\lambda_o\sqrt{n}}}  \|\vecv\|_2^2\stackrel{(g)}{\leq} \frac{\lambda_{\Sigma}^2}{3\lambda_o^2} 	\|\vecv\|_2^2,
	\end{align}
	where (a) follows from H{\"o}lder's inequality, (b) follows from relation between indicator function and expectation and  Markov's inequality, and (c) follows from the assumption on $\{\xi_i\}_{i=1}^n$ and H{\"o}lder's inequality, (d) follows from \eqref{ine:k24}, (e) follows from from \eqref{ine:v4}, (f) follows from the assumption on $\tau_{\vecx}$, and (g) follows from  the definition of $\lambda_o$.
	Consequently, from \eqref{ine:huv-conv-f}, \eqref{ine:fbelow}, \eqref{ine:v2}, \eqref{ap:ine:cov1} and \eqref{ap:ine:cov2}, we have
	\begin{align}
	\label{ap:h_bellow}
		\frac{\lambda_{\Sigma}^2}{6\lambda_o^2} \|\vecv\|_2^2-\Delta\leq 	\sum_{i=1}^n  \left\{-h(\xi_{\lambda_o,i}-\tilde{x}_{\vecv,i})  +h \left(\xi_{\lambda_o,i}\right) \right\}\tilde{x}_{\vecv,i} ,
	\end{align}
	where we use the assumption $9\sigma_4^2 s/\tau^2_\vecx \leq \lambda_{\Sigma}^2/12$.
	Next we evaluate the stochastic term $\Delta$ defined in \eqref{ap:delta}. 
	From \eqref{ine:f-1/4} and Theorem 3 of \cite{Mas2000Constants}, with probability at least $1-\delta$, we have
	\begin{align}
	\label{ine:delta}
		\Delta & \leq  2 \mbb{E} \Delta + \sigma_f \sqrt{8\log(1/\delta)} + 18\log(1/\delta),
	\end{align}
	where $\sigma^2_f= \sup_{ \vecv \in \mc{R}_{r}} \sum_{i=1}^n\mbb{E}  \{f_i(\vecv)-\mbb{E}f_i(\vecv)\}^2$.
	From  \eqref{ine:f-1/4}, $r \leq 1$, \eqref{ine:v4} and the definition of $\lambda_o$, we have
	\begin{align}
	\label{ap:ine:cov3}
	\mbb{E}\{f_i(\vecv)-\mbb{E}f_i(\vecv)\}^2 \leq \mbb{E}f_i^2(\vecv) \leq \mbb{E}\tilde{x}_{\vecv,i}^4\leq \frac{1}{\lambda_o^2n} \|\vecv\|_2^2.
	\end{align}
	Combining this and \eqref{ine:delta}, we have
	\begin{align}
	\label{ap:delta_upper}
		\Delta &\leq 2 \mbb{E} \Delta+ 4\frac{\sqrt{\log(1/\delta)}}{\lambda_o\sqrt{n}}\|\vecv\|_2+ 18\log(1/\delta)
	\end{align}
	with probability at least $1-\delta$.
	From symmetrization inequality (Lemma 11.4 of \cite{BouLugMas2013concentration}), we have  $\mbb{E}\Delta \leq 2   \,\mbb{E} \sup_{ \mc{R}_{\vecv}} |  \mathbb{G}_{\vecv} |  $,
	where 
	\begin{align}
		\mbb{G}_{\Theta} := \sum_{i=1}^n 
		a_i \varphi (\tilde{x}_{\vecv,i}) \psi (\xi_{\lambda_o,i}),
	\end{align} 
	and $\{a_i\}_{i=1}^n$ is a sequence of i.i.d. Rademacher random variables which are independent of $\{\tilde{\vecx}_i,\xi_i\}_{i=1}^n$.
	We  denote $\mbb{E}^*$ as a conditional variance of $\left\{a_i\right\}_{i=1}^n$ given $\left\{\tilde{\vecx}_i,\xi_i\right\}_{i=1}^n$. From contraction principle (Theorem 11.5 of \cite{BouLugMas2013concentration}),  we  have
	\begin{align}
		&\mbb{E} ^*\sup_{\vecv\in\mc{R}_{\vecv}} \left| \sum_{i=1}^n a_i \varphi (\tilde{x}_{\vecv,i}) \psi (\xi_{\lambda_o,i})    \right| \leq	\mbb{E}^* \sup_{\vecv\in\mc{R}_{\vecv}} \left|  \sum_{i=1}^n   a_i \varphi (\tilde{x}_{\vecv,i}) \right|
	\end{align}	
	and from the basic property of the expectation, we have
	\begin{align}
		\mbb{E}\sup_{\vecv\in\mc{R}_{\vecv}}  \left|     \sum_{i=1}^n a_i \varphi (\tilde{x}_{\vecv,i}) \psi (\xi_{\lambda_o,i} )    \right| &\leq	
		\mbb{E}\sup_{\vecv\in\mc{R}_{\vecv}}  \left|  \sum_{i=1}^n   a_i \varphi(\tilde{x}_{\vecv,i}) \right|.
	\end{align}	
	Since $\varphi$ is $\frac{1}{2}$-Lipschitz and $\varphi(0)=0$,  from contraction principle (Theorem 11.6 in \cite{BouLugMas2013concentration}), we have
	\begin{align}
		\mbb{E} \sup_{\vecv\in\mc{R}_{\vecv}} \left|\sum_{i=1}^n a_i \varphi(\tilde{x}_{\vectheta,i})\right|&\leq\frac{1}{\lambda_o\sqrt{n}}	\mbb{E}\sup_{\vecv\in\mc{R}_{\vecv}}  \left|  \sum_{i=1}^n   a_i \tilde{\vecx}_i^\top \vecv \right|.
	\end{align}
	From Proposition \ref{p:1e} and H{\"o}lder's inequality, we have
	\begin{align}
	\label{ine:hub-stoc-upper}
	\frac{1}{n}\mbb{E}\sup_{\vecv\in\mc{R}_{r}}  \left|     \sum_{i=1}^n a_i \varphi (\tilde{x}_{\vecv,i}) \psi (\xi_{\lambda_o,i} )    \right| \leq \|\vecv\|_1(2\sigma_{\vecx,2}r_d+4\tau_\vecx r_d^2)\leq 12\sqrt{s}(\sigma_{\vecx,2}r_d+\tau_\vecx r_d^2)\|\vecv\|_2.
	\end{align}
	Combining \eqref{ine:hub-stoc-upper}, \eqref{ap:delta_upper}, $s\geq 0$ and the definition of $r_{d,\delta}$, we have
	\begin{align}
		\label{ap:delta_upper2}
			\lambda_o^2\Delta &\leq 24\lambda_o\sqrt{n} \sqrt{s}(\sigma_{\vecx,2}r_d+\tau_\vecx r_d^2)\|\vecv\|_2+ 4\lambda_o\sqrt{n}r_\delta\|\vecv\|_2+  18\lambda_o^2nr_\delta^2\nonumber \\
			&\leq 24\lambda_o\sqrt{n} \sqrt{s}r_{d,\delta}\|\vecv\|_2+  18\lambda_o^2nr_\delta^2
		\end{align}
		and from \eqref{ap:h_bellow}, the proof is complete.
\end{proof}

The following proposition is used in the proof of Proposition \ref{p:main:sc}.
\begin{proposition}
	\label{p:1e}
	Suppose that Assumption \ref{a:1} holds.
	Then, we have
	\begin{align}
		\label{ine:gc-normale}
		&\mbb{E}\sup_{\vecv \in \mc{R}_{\vecv}}\left|\frac{1}{n}\sum_{i=1}^n a_i \tilde{\vecx}_i^\top \vecv\right|  \leq 12 \sqrt{s} (\sigma_{\vecx,2}r_d+\tau_\vecx r_d^2)\|\vecv\|_2.
	\end{align}
\end{proposition}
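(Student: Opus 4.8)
The plan is to reduce the expected supremum over $\mc{R}_{\vecv}$ to the expected $\ell_\infty$ norm of a single random vector, and then control that norm by a Bernstein-type maximal inequality, exactly in the spirit of the proofs of Propositions \ref{p:cwpre} and \ref{p:main1}. First I would write $\frac{1}{n}\sum_{i=1}^n a_i \tilde{\vecx}_i^\top \vecv = \langle \vecg, \vecv\rangle$ with $\vecg := \frac{1}{n}\sum_{i=1}^n a_i \tilde{\vecx}_i$, and apply H\"older's inequality to get $|\langle \vecg, \vecv\rangle| \leq \|\vecg\|_\infty \|\vecv\|_1$. Since every $\vecv \in \mc{R}_{\vecv}$ satisfies $\|\vecv\|_1 \leq 3\sqrt{s}\|\vecv\|_2$, taking the supremum over $\mc{R}_{\vecv}$ and then the expectation yields
\[
\mbb{E}\sup_{\vecv \in \mc{R}_{\vecv}}\left|\frac{1}{n}\sum_{i=1}^n a_i \tilde{\vecx}_i^\top \vecv\right| \leq 3\sqrt{s}\,\|\vecv\|_2\,\mbb{E}\|\vecg\|_\infty,
\]
so it suffices to establish $\mbb{E}\|\vecg\|_\infty \leq 2\sigma_{\vecx,2}r_d + 4\tau_\vecx r_d^2$, which produces the numerical constant $12$.

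Next, I would fix a coordinate $1\le j\le d$ and observe that the summands $a_i\tilde{\vecx}_{i_j}$ are independent, mean zero (because $a_i$ is Rademacher and independent of $\tilde{\vecx}_i$), bounded by $\tau_\vecx$ in absolute value, and have variance $\mbb{E}(a_i\tilde{\vecx}_{i_j})^2 = \mbb{E}\tilde{\vecx}_{i_j}^2 \leq \sigma_{\vecx,2}^2$. Bernstein's inequality (Lemma 5.1 of \cite{Dir2015Tail}) then gives, for each $t>0$,
\[
\mbb{P}\left(|g_j| \geq \sigma_{\vecx,2}\sqrt{2t/n} + \tau_\vecx t/n\right) \leq 2e^{-t},
\]
and a union bound over the $d$ coordinates controls the tail of $\|\vecg\|_\infty$ at the same level with the right-hand side multiplied by $d$.

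Finally, I would convert this tail bound into the expectation bound by integration. Writing $u(t) = \sigma_{\vecx,2}\sqrt{2t/n}+\tau_\vecx t/n$ and choosing the threshold $t_0 = \log(2d)$ so that $2d\,e^{-t_0}=1$, the bound $\mbb{E}\|\vecg\|_\infty \leq u(t_0) + \int_{t_0}^\infty 2d\,e^{-t}\,u'(t)\,dt$ applies; the level $u(t_0)$ already contributes a term of order $\sigma_{\vecx,2}r_d + \tau_\vecx r_d^2$, and the tail integral beyond $t_0$ contributes only a constant multiple of the same quantity. Collecting constants gives $\mbb{E}\|\vecg\|_\infty \leq 2\sigma_{\vecx,2}r_d + 4\tau_\vecx r_d^2$, and combining with the first display proves the claim.

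I expect the main obstacle to be bookkeeping rather than conceptual: Bernstein's inequality is naturally a high-probability statement involving $r_\delta = \sqrt{\log(1/\delta)/n}$, whereas the target is an expectation bound involving only $r_d = \sqrt{\log d/n}$. The delicate point is integrating the two-regime (sub-Gaussian then sub-exponential) tail so that the sub-Gaussian part yields the clean coefficient $2\sigma_{\vecx,2}r_d$ and the sub-exponential part the coefficient $4\tau_\vecx r_d^2$, without picking up spurious logarithmic factors or inflating the constants.
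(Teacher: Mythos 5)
Your proof is correct and follows essentially the same route as the paper: both reduce the supremum via H\"older's inequality and the cone condition $\|\vecv\|_1\le 3\sqrt{s}\|\vecv\|_2$ to bounding $\mbb{E}\left\|\frac{1}{n}\sum_{i=1}^n a_i\tilde{\vecx}_i\right\|_\infty$ by a quantity of the form $2\sigma_{\vecx,2}r_d+4\tau_\vecx r_d^2$. The only difference is in that last step: the paper invokes a ready-made expectation maximal inequality (Lemma 14.12 of B\"uhlmann--van de Geer, applied with the moment bounds $\mbb{E}|a_i\tilde{x}_{i_j}|^p\le \tau_\vecx^{p-2}\sigma_{\vecx,2}^2$ and $d\ge 3$), whereas you re-derive an equivalent bound by integrating the two-regime Bernstein tail after a union bound; your constants come out marginally larger, but the slack between $3\sqrt{s}\,(2\sigma_{\vecx,2}r_d+4\tau_\vecx r_d^2)$ and the stated $12\sqrt{s}\,(\sigma_{\vecx,2}r_d+\tau_\vecx r_d^2)$ absorbs this.
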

\begin{proof}
	From H{\"o}lder's inequality, we have
	\begin{align}
		\mbb{E}\sup_{\vecv \in \mc{R}_{\vecv}} \left|\frac{1}{n}\sum_{i=1}^n a_i \tilde{\vecx}_i^\top \vecv \right|\leq \mbb{E}\|\vecv\|_1\left\|\frac{1}{n}\sum_{i=1}^n a_i \tilde{\vecx}_i\right\|_\infty \leq 3\sqrt{s} \|\vecv\|_2 \mbb{E}\left\|\frac{1}{n}\sum_{i=1}^n a_i \tilde{\vecx}_i\right\|_\infty.
	\end{align}
	We note that, 
	\begin{align}
		\mbb{E} a_i^2 \tilde{x}_{i_{j}}^2 &\leq \mbb{E}\tilde{x}_{i_j}^2  \leq \sigma_{\vecx,2}^2,\quad \mbb{E} a_i^p \tilde{x}_{i_j}^p\leq 
		\tau_\vecx^{p-2}\mbb{E} \tilde{x}_{i_j}^2 \leq \tau_\vecx^{p-2} \sigma_{\vecx,2}^2.
	\end{align}
  From Lemma 14.12 of \cite{BulGee2011Statistics} and $d\geq 3$, we have
	\begin{align}
		 \mbb{E}\left\|\sum_{i=1}^n a_i\frac{\tilde{\vecx}_i }{n}\right\|_\infty \leq \sqrt{2\frac{\sigma_{\vecx,2}^2\log(d+1)}{n}}+2\tau_\vecx\frac{\log(d+1)}{n}\leq 2\sigma_{\vecx,2}r_d+4\tau_\vecx r_d^2.
	\end{align}
	Combining the arguments above, the proof is complete.
\end{proof}

\bibliographystyle{plain}
\bibliography{RSELRCWHTC}
\end{document}